\newtheorem{lemma}{Lemma}
\newtheorem{theorem}{Theorem}
\newtheorem{corollary}{Corollary}
\newcommand\doubleInner[1]{\langle \! \langle #1 \rangle \! \rangle}
\newcommand\bigdoubleInner[2]{\big\langle \! \big\langle #1, \, #2 \big\rangle \! \big\rangle}
\newcommand\BigdoubleInner[2]{\Big\langle \!\! \Big\langle #1, \, #2 \Big\rangle \!\! \Big\rangle}
\newcommand\matNorm[1]{|\!|\!|#1|\!|\!|}
\newcommand\offNorm[1]{\|#1\|_{1,\textnormal{off}}}
\newcommand\diagNorm[1]{\|#1\|_{1,\textnormal{diag}}}
\def\F{{\textnormal{F}}}
\newcommand{\argmin}{\operatornamewithlimits{argmin}}
\DeclareMathOperator*{\minimize}{minimize}
\DeclareMathOperator*{\st}{s.t.}
\def\P{\mathbb{P}}
\def\reals{\mathbb{R}}
\def\param{\Theta}
\def\Tparam{\param^*}
\def\Tw{w^*}
\def\Lparam{\widetilde{\param}}
\def\Lw{\widetilde{w}}
\def\errt{\Delta}
\def\Lerrt{\widetilde{\Delta}}
\def\Lerrw{\widetilde{\Gamma}}
\def\RSCcon{{\kappa_{l}}}
\def\IncCon{\tau_1(n,p)}
\def\IncConTwo{\tau_2(n,p)}
\def\SigB{\Sigma_B}
\def\xi{X^{(i)}}
\def\eventOne{\mathcal{A}}
\def\Loss{\mathcal{L}}
\newlength{\widebarargwidth}
\newlength{\widebarargheight}
\newlength{\widebarargdepth}
\DeclareRobustCommand{\widebar}[1]{%
  \settowidth{\widebarargwidth}{\ensuremath{#1}}%
  \settoheight{\widebarargheight}{\ensuremath{#1}}%
  \settodepth{\widebarargdepth}{\ensuremath{#1}}%
  \addtolength{\widebarargwidth}{-0.3\widebarargheight}%
  \addtolength{\widebarargwidth}{-0.3\widebarargdepth}%
  \makebox[0pt][l]{\hspace{0.15\widebarargheight}%
    \hspace{0.3\widebarargdepth}%
    \addtolength{\widebarargheight}{0.3ex}%
    \rule[\widebarargheight]{0.95\widebarargwidth}{0.1ex}}%
  {#1}}
\begin{document}

\title{Robust Gaussian Graphical Modeling with the Trimmed Graphical Lasso}
\date{}
\author{
    Eunho Yang and  Aur\'elie C. Lozano\\
    {\small IBM T.J. Watson Research Center}
}

\maketitle

\begin{abstract}
Gaussian Graphical Models (GGMs) are popular tools for studying network structures. 
However, many modern applications such as gene network discovery and social interactions analysis often involve high-dimensional noisy data with outliers or heavier tails than the Gaussian distribution.
In this paper, we propose the Trimmed Graphical Lasso for robust estimation of sparse GGMs.  Our method guards against outliers by an implicit trimming mechanism akin to the popular Least Trimmed Squares method used for linear regression.
We provide a rigorous statistical analysis of our estimator in the high-dimensional setting. In contrast, existing approaches for robust sparse GGMs estimation lack statistical guarantees.
Our theoretical results are complemented by experiments on simulated and real gene expression data which further demonstrate the value of our approach.
\end{abstract}

\section{Introduction}
Gaussian graphical models (GGMs) form a powerful class of statistical models for representing distributions over a set of variables~\cite{Lauritzen96}. These models employ undirected graphs to encode conditional independence assumptions among the variables, which is particularly convenient for exploring network structures. 
GGMs are widely used in variety of domains, including computational biology~\cite{OhD14}, natural language processing~\cite{Manning:99}, image processing~\cite{Woods78,Hassner78,Cross83}, statistical physics~\cite{Ising25}, and spatial statistics~\cite{Ripley81}. 
In many modern applications, the number of variables $p$  can exceed the number of observations $n.$ For instance, the number of genes in microarray data is typically larger than the sample size. 
In such high-dimensional settings, sparsity constraints  are particularly pertinent for estimating GGMs, as they encourage only a few parameters to be non-zero and induce graphs with few edges.  
The most widely used estimator among others (see e.g. \cite{YLR14GM}) minimizes the Gaussian negative log-likelihood regularized by the $\ell_1$ norm of the entries (or the off-diagonal entries) of the precision matrix (see~\cite{YuaLin07,FriedHasTib2007,BanGhaAsp08}).
This estimator enjoys strong statistical guarantees (see e.g. \cite{RWRY11}). The corresponding optimization problem is a log-determinant program that can be solved with interior point methods~\cite{Boyd02} or by co-ordinate descent algorithms~\cite{FriedHasTib2007,BanGhaAsp08}.  
Alternatively neighborhood selection~\cite{Meinshausen06,YRAL12} can be employed to estimate conditional independence relationships separately for each node in the graph, via Lasso linear regression,~\cite{Tibshirani96}. Under certain assumptions, the sparse GGM structure can still be recovered even under high-dimensional settings.

The aforementioned approaches rest on a fundamental assumption: the multivariate normality of the observations. However, outliers and corruption are frequently encountered in high-dimensional data (see e.g.~\cite{daye2012} for gene expression data). Contamination of a few observations can drastically affect the quality of model estimation. It is therefore imperative to devise procedures that can cope with observations deviating from the model assumption. Despite this fact, little attention has been paid to robust estimation of high-dimensional graphical models. Relevant work includes~\cite{drton2011}, which leverages multivariate $t$-distributions for robustified inference and the EM algorithm. They also propose an alternative $t$-model which adds flexibility to the classical $t$ but requires the use of Monte Carlo EM or variational approximation as the likelihood function is not available explicitly. Another pertinent work is that of~\cite{sun2012} which introduces a robustified likelihood function. A two-stage procedure is proposed for model estimation, where the graphical structure is first obtained via coordinate gradient descent and the concentration matrix coefficients are subsequently re-estimated using iterative proportional fitting so as to guarantee positive definiteness of the final estimate.

In this paper, we propose the \emph{Trimmed Graphical Lasso} method for robust Gaussian graphical modeling in the sparse high-dimensional setting.
Our approach is inspired by the classical Least Trimmed Squares method used for robust linear regression~\cite{SparseLS}, in the sense that it disregards the observations that are judged less reliable. More specifically the Trimmed Graphical Lasso seeks to minimize a weighted version of the negative log-likelihood regularized by the $\ell_1$ penalty on the concentration matrix for the GGM and under some simple constraints on the weights. These weights implicitly induce the trimming of certain observations. Our key contributions can be summarized as follows.
\begin{itemize}
\item  We introduce the Trimmed Graphical Lasso formulation, along with two strategies for solving the objective. One involves solving a series of graphical lasso problems; the other is more efficient and leverages composite gradient descent in conjunction with partial optimization. 
\item As our key theoretical contribution, we provide statistical guarantees on the consistency of our estimator. To the best of our knowledge, this is in stark contrast with prior work on robust sparse GGM estimation (e.g. ~\cite{drton2011,sun2012}) which do not provide any statistical analysis.  
\item Experimental results under various data corruption scenarios further demonstrate the value of our approach.
\end{itemize}

\section{Problem Setup and Robust Gaussian Graphical Models}
{\bf Notation. }
For matrices $U \in \reals^{p \times p}$ and $V \in \reals^{p \times p}$, $\doubleInner{U,V}$ denotes the trace inner product $\textnormal{tr}(A\,B^T)$. For a matrix $U \in \reals^{p \times p}$ and parameter $a \in [1, \infty]$, $\|U\|_a$ denotes the element-wise $\ell_a$ norm, and $\|U\|_{a,\textnormal{off}}$ does the element-wise $\ell_a$ norm only for off-diagonal entries. For example, $\offNorm{U} := \sum_{i\neq j} |U_{ij}|$. Finally, we use $\|U\|_\F$ and $\matNorm{U}_2$ to denote the Frobenius  and spectral norms, respectively. 

{\bf Setup. }
Let $X = (X_1, X_2, \hdots, X_p)$ be a zero-mean Gaussian random field parameterized by $p\times p$ concentration matrix $\Tparam$:
\begin{align}\label{EqnGRF}
	\P( X; \Tparam) =  \exp \Big( -\frac{1}{2} \doubleInner{\Tparam,XX^\top}  - A(\Tparam) \Big)
\end{align}  
where $A(\Tparam)$ is the log-partition function of Gaussian random field. Here, the probability density function in \eqref{EqnGRF} is associated with $p$-variate Gaussian distribution, $N(0,\Sigma^*)$ where $\Sigma^* = (\Tparam)^{-1}$.

Given $n$ i.i.d. samples $\big\{X^{(1)},\hdots,X^{(n)} \big\}$ from high-dimensional Gaussian random field \eqref{EqnGRF}, the standard way to estimate the inverse covariance matrix is to solve the $\ell_1$ regularized maximum likelihood estimator (MLE) that can be written as the following regularized \emph{log-determinant} program:
\begin{align}\label{EqnGGM}
	\minimize_{\param \in \Omega} \ & \BigdoubleInner{\param}{\frac{1}{n}\sum_{i=1}^n \xi(\xi)^\top} -\log \det (\param)   + \lambda \offNorm{\param} 
\end{align}
where $\Omega$ is the space of the symmetric positive definite matrices, and $\lambda$ is a regularization parameter that encourages a \emph{sparse} graph model structure.

In this paper, we consider the case where the number of random variables $p$ may be substantially larger than the number of sample size $n$, however, the concentration parameter of the underlying distribution is sparse:
\begin{enumerate}[leftmargin=0.1cm, itemindent=1.1cm,label=\textbf{(C-$\bf{1}$)}, ref=\textnormal{(C-$1$)}]
	%\item $\Loss_{(i)}(\theta^*;y_i,x_i) \geq r_{\min}\geq 0$ for all $i \in B$.\label{Con:Bad}
	\item The number of non-zero off-diagonal entries of $\Tparam$ is at most $k$, that is $| \{ \Tparam_{ij} \, : \, \Tparam_{ij} \neq 0 \text{ for } i\neq j\} | \leq k$.  \label{Con:Sparse}  
\end{enumerate}

Now, suppose that $n$ samples are drawn from this underlying distribution \eqref{EqnGRF} with true parameter $\Tparam$. We further allow some samples are corrupted and not drawn from \eqref{EqnGRF}. Specifically, the set of sample indices $\{1,2,\hdots, n\}$ is separated into two disjoint subsets: if $i$-th sample is in the set of ``good'' samples, which we name $G$, then it is a genuine sample from \eqref{EqnGRF} with the parameter $\Tparam$. On the other hand, if the $i$-th sample is in the set of ``bad'' samples, $B$, the sample is corrupted. The identifications of $G$ and $B$ are hidden to us. However, we naturally assume that only a small number of samples are corrupted: %Here, we only consider the following case:
\begin{enumerate}[leftmargin=0.1cm, itemindent=1.1cm,label=\textbf{(C-$\bf{2}$)}, ref=\textnormal{(C-$2$)}]
	%\item $\Loss_{(i)}(\theta^*;y_i,x_i) \geq r_{\min}\geq 0$ for all $i \in B$.\label{Con:Bad}
	\item Let $h$ be the number of good samples: $h := |G|$ and hence $|B| = n-h$. Then, we assume that larger portion of samples are genuine and uncorrupted so that $\frac{|G|-|B|}{|G|} \geq \alpha$ where $0 < \alpha \leq 1$. If we assume that 40\% of samples are corrupted, then $\alpha = \frac{0.6n-0.4n}{0.6n} = \frac{1}{3}$. \label{Con:h}  
\end{enumerate}
In later sections, we will derive a \emph{robust} estimator for corrupted samples of sparse Gaussian graphical models and provide statistical guarantees of our estimator under the conditions \ref{Con:Sparse} and \ref{Con:h}.

\subsection{Trimmed Graphical Lasso}
We now propose a \emph{Trimmed Graphical Lasso} for robust estimation of sparse GGMs: %for the model \eqref{EqnGRF} under the conditions of \ref{Con:Sparse} and \ref{Con:h}:
\begin{equation}\label{EqnRobustGGM}
	\begin{aligned}
		\minimize_{\param \in \Omega, w} \, \, & \BigdoubleInner{\param}{\frac{1}{h}\sum_{i=1}^n w_i \xi(\xi)^\top} -\log \det (\param)   + \lambda \offNorm{\param} \\
		%\st & \ g(\theta) \leq R\, , \, \theta \in \Omega \nonumber\\
		\st \, \, & w \in [0,1]^n \, , \, {\bf 1}^\top w = h \, , \, \|\param\|_1 \leq R
	\end{aligned}
\end{equation}
where $\lambda$ is a regularization parameter to decide the sparsity of our estimation, and $h$ is another parameter, which decides the number of samples (or sum of weights) used in the training. $h$ is ideally set as the number of uncorrupted samples in $G$, but practically we can tune the parameter $h$ by cross-validation.  Here, the constraint $\|\param\|_1 \leq R$ is required to analyze this non-convex optimization problem as discussed in \cite{Loh13}. For another tuning parameter $R$, any positive real value would be sufficient for $R$ as long as $\|\Tparam\|_1 \leq R$. Finally note that when $h$ is fixed as $n$ (and $R$ is set as infinity), the optimization problem \eqref{EqnRobustGGM} will be simply reduced to the vanilla $\ell_1$ regularized MLE for sparse GGM without concerning outliers. 

The optimization problem \eqref{EqnRobustGGM} is convex in $w$ as well as in $\param$, however this is not the case jointly. Nevertheless, we will show later that {\bf any} local optimum of \eqref{EqnRobustGGM} is guaranteed to be strongly consistent under some fairly mild conditions.

\def\AlgoSize{\small}
\begin{algorithm}[tb]
	\caption{\AlgoSize Trimmed Graphical Lasso in \eqref{EqnRobustGGM}}
	\label{alg:trimmedGGM}
	\begin{algorithmic}
		\AlgoSize
		\STATE Initialize $\param^{(0)}$  (e.g. $\param^{(0)}=(S+\lambda I)^{-1}$)%and $w^{(0)}$
		\REPEAT
			\STATE Compute $w^{(t)}$ given $\param^{(t-1)}$, by assigning a weight of one to the $h$ observations with lowest negative log-likelihood and a weight of zero to the remaining ones.
			\STATE $\nabla \mathcal{L}^{(t)} \leftarrow \frac{1}{h}\sum_{i=1}^n w_i^{(t)} \xi(\xi)^\top - (\param^{(t-1)})^{-1} $ 
			\STATE {\it Line search.} Choose $\eta^{(t)}$ (See Nesterov (2007) for a discussion of how the stepsize may be chosen), checking that the following update maintains positive definiteness. This can be verified via Cholesky factorization (as in~\cite{HSDR11}). 
			\STATE {\it Update.} $\param^{(t)} \leftarrow  \mathcal{S}_{\eta^{(t)} \lambda}(\param^{(t-1)} - \eta^{(t)} \nabla \mathcal{L}^{(t)}),$ 
			where $\mathcal S$ is the soft-thresholding operator: $[S_\nu(U)]_{i,j} = \text{sign}(U_{i,j}) \max(|U_{i,j}|-\nu, 0)$ and is only applied to the off-diagonal elements of matrix $U.$ 			%{\color{blue}\STATE {\it Projection in safety region}. If needed enforce $g(\theta^{(t)})\leq R.$ }
			\STATE Compute $(\param^{(t)})^{-1} $ reusing the Cholesky factor.
			
		\UNTIL{stopping criterion is satisfied}
	\end{algorithmic}
\end{algorithm}

\paragraph{Optimization}
As we briefly discussed above, the problem \eqref{EqnRobustGGM} is not jointly convex but biconvex. One possible approach to solve the objective of \eqref{EqnRobustGGM} thus is to alternate between solving for $\param$ with fixed $w$ and solving for $w$ with fixed $\param.$ Given $\param,$ solving for  $w$ is straightforward and boils down to assigning a weight of one to the $h$ observations with lowest negative log-likelihood and a weight of zero to the remaining ones. Given $w$, solving for $\param$ can be accomplished by any algorithm  solving the ``vanilla'' graphical Lasso program, e.g.~\cite{FriedHasTib2007,BanGhaAsp08}. Each step solves a convex problem hence the objective is guaranteed to decrease at each iteration and will converge to a local minima.

A more efficient optimization approach can be obtained by adopting a partial minimization strategy for $\param$: rather than solving to completion for $\param$ each time $w$ is updated, one performs a single step update. This approach stems from considering the following equivalent reformulation of our objective:
\begin{equation}\label{EqnRobustGGMrev}
	\begin{aligned}
		\minimize_{\param \in \Omega} \, \,  & \BigdoubleInner{\param}{\frac{1}{h}\sum_{i=1}^n w_i(\param) \xi(\xi)^\top} -\log \det (\param)   + \lambda \offNorm{\param} \\
		%\st & \ g(\theta) \leq R\, , \, \theta \in \Omega \nonumber\\
		\st \, \, &  w_i( \param)=\argmin_{w \in [0,1]^n \, , ~ {\bf 1}^\top w = h}   \BigdoubleInner{\param}{\frac{1}{h}\sum_{i=1}^n w_i \xi(\xi)^\top}\ , ~~~ \|\param\|_1 \leq R \, , \\
	\end{aligned}
\end{equation}
On can then leverage standard first-order methods such as projected and composite gradient descent \cite{Nesterov7} that will converge to local optima. The overall procedure is depicted in Algorithm~\ref{alg:trimmedGGM}.  Therein we assume that we pick $R$ sufficiently large, so one does not need to enforce the constraint $\| \param \|_1 \leq R$ explicitly. If needed the constraint can be enforced by an additional projection step~\cite{Loh13}.

\section{Statistical Guarantees of Trimmed Graphical Lasso}

One of the main contributions of this paper is to provide the statistical guarantees of our Trimmed Graphical Lasso estimator for GGMs. The optimization problem \eqref{EqnRobustGGM} is non-convex, and therefore the gradient-type methods solving \eqref{EqnRobustGGM} will find estimators by local minima. Hence, our theory in this section provides the statistical error bounds on {\bf any} local minimum measured by $\|\cdot\|_\F$ and $\offNorm{\cdot}$ norms simultaneously.  

Suppose that we have some local optimum $(\Lparam,\Lw)$ of \eqref{EqnRobustGGM} by arbitrary gradient-based method. While $\Tparam$ is fixed unconditionally, we define $\Tw$ as follows: for a sample index $i \in G$, $\Tw_i$ is simply set to $\Lw_i$ so that $\Tw_i - \Lw_i = 0$. Otherwise for a sample index $i \in B$, we set $\Tw_i = 0$. Hence, $\Tw$ is dependent on $\Lw$.

In order to derive the upper bound on the Frobenius norm error, we first need to assume the standard restricted strong convexity condition of \eqref{EqnRobustGGM} with respective to the parameter $\param$:
\begin{enumerate}[leftmargin=0.1cm, itemindent=1.1cm,label=\textbf{(C-$\bf{3}$)}, ref=\textnormal{(C-$3$)}]
	\item {\bf (Restricted strong convexity condition)}
		Let $\errt$ be an arbitrary error of parameter $\param$. That is, $\errt := \param-\Tparam$. Then, for any possible error $\errt$ such that $\|\errt\|_\F \leq 1$,  
		\begin{align}\label{EqnRSC}
			\BigdoubleInner{ \big(\Tparam\big)^{-1} - \big(\Tparam + \errt\big)^{-1} }{\errt} \geq \RSCcon \|\errt\|_\F^2 
		\end{align}
		where $\RSCcon$ is a curvature parameter.\label{Con:rsc}
\end{enumerate}
Note that in order to guarantee the Frobenius norm-based error bounds, \ref{Con:rsc} is required even for the vanilla Gaussian graphical models \emph{without} outliers, which has been well studied by several works such as the following lemma:
\begin{lemma}[Section B.4 of \cite{Loh13}]\label{Lem:RSC}
	For any $\errt \in \reals^{p \times p}$ such that $\|\errt\|_\F \leq 1$, 
	\begin{align*}
		\BigdoubleInner{ \big(\Tparam\big)^{-1} - \big(\Tparam + \errt\big)^{-1} }{\errt} \geq \big(\matNorm{\Tparam}_2 + 1\big)^{-2} \|\errt\|_\F^2 \, ,
	\end{align*}
	%where $\matNorm{\cdot}_2$ denotes the spectral norm of a matrix. 	
\end{lemma}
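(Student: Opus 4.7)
The plan is to view the left-hand side as the Bregman-type gap associated with $f(\Theta) := -\log\det(\Theta)$, whose gradient is $\nabla f(\Theta) = -\Theta^{-1}$. Then
\[
\BigdoubleInner{(\Tparam)^{-1} - (\Tparam+\errt)^{-1}}{\errt} \;=\; \inner{\nabla f(\Tparam+\errt) - \nabla f(\Tparam)}{\errt},
\]
and by the fundamental theorem of calculus this equals $\int_0^1 \nabla^2 f(\Tparam + t\errt)[\errt,\errt]\, dt$. Since $\errt=\param-\Tparam$ is symmetric and the path $\Tparam + t\errt$ lies in the convex cone $\Omega$ of positive definite matrices for $t\in[0,1]$ (because both endpoints do), the integrand is well defined. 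For the log-det function, a direct calculation gives
\[
\nabla^2 f(\Theta)[\errt,\errt] \;=\; \mathrm{tr}(\Theta^{-1}\errt\,\Theta^{-1}\errt).
\]

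The next step is to rewrite this as a squared Frobenius norm and extract eigenvalue information. Writing $M_t := (\Tparam+t\errt)^{-1}\succ 0$, I would use the cyclic property of the trace to get $\mathrm{tr}(M_t \errt M_t \errt) = \mathrm{tr}\bigl((M_t^{1/2} \errt M_t^{1/2})^2\bigr)$, which, since $\errt$ is symmetric, equals $\|M_t^{1/2}\errt M_t^{1/2}\|_\F^2$. Applying the inequality $\|AXB\|_\F^2\ge \sigma_{\min}(A)^2\sigma_{\min}(B)^2\|X\|_\F^2$ (which is immediate from $\|AX\|_\F^2=\mathrm{tr}(X^\top A^\top A X)\ge\sigma_{\min}(A)^2\|X\|_\F^2$, applied twice) yields
\[
\nabla^2 f(\Tparam+t\errt)[\errt,\errt] \;\ge\; \lambda_{\min}(M_t)^2\,\|\errt\|_\F^2 \;=\; \frac{\|\errt\|_\F^2}{\lambda_{\max}(\Tparam+t\errt)^2}.
\]

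Now I would uniformly bound the denominator. By subadditivity of the spectral norm and the hypothesis $\|\errt\|_\F\le 1$,
\[
\lambda_{\max}(\Tparam+t\errt) \;\le\; \matNorm{\Tparam}_2 + t\matNorm{\errt}_2 \;\le\; \matNorm{\Tparam}_2 + \|\errt\|_\F \;\le\; \matNorm{\Tparam}_2 + 1 \quad\text{for all } t\in[0,1].
\]
Plugging this uniform bound into the integrand and integrating $t$ over $[0,1]$ gives the desired inequality.

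The routine-looking step that actually requires a bit of care is the Hessian quadratic-form manipulation: one must use symmetry of $\errt$ to convert $\mathrm{tr}(N^2)$ into $\|N\|_\F^2$ with $N=M_t^{1/2}\errt M_t^{1/2}$, and one must verify that $\Tparam+t\errt$ stays positive definite along the whole segment (which follows from convexity of $\Omega$). Apart from these, the argument is a clean one-line calculation once the integral form and the eigenvalue bound $\lambda_{\max}(\Tparam+t\errt)\le \matNorm{\Tparam}_2+1$ are in place.
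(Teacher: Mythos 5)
Your argument is correct and is essentially the proof the paper relies on (the lemma is not reproved there but cited from Section B.4 of \cite{Loh13}): in that argument, too, the left-hand side is written as $\int_0^1 \nabla^2(-\log\det)(\Tparam+t\errt)[\errt,\errt]\,dt$ and the Hessian quadratic form is lower-bounded by $\matNorm{\Tparam+t\errt}_2^{-2}\,\|\errt\|_\F^2 \geq \big(\matNorm{\Tparam}_2+1\big)^{-2}\|\errt\|_\F^2$, the only cosmetic difference being that the Kronecker form $(\Tparam+t\errt)^{-1}\otimes(\Tparam+t\errt)^{-1}$ is used there in place of your $M_t^{1/2}\errt M_t^{1/2}$ rewriting. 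Your caveat that $\errt$ should be symmetric with $\Tparam+\errt \succ 0$ is the right reading of ``any $\errt$'' in the statement: these are exactly the errors $\param-\Tparam$ with $\param\in\Omega$ to which condition \ref{Con:rsc} is applied, and the inequality can fail without that restriction.
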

thus \ref{Con:rsc} holds with $\RSCcon = \big(\matNorm{\Tparam}_2 + 1\big)^{-2}$.

While \ref{Con:rsc} is a standard condition that is also imposed for the conventional estimators under clean set of of samples, we additionally require the following condition for a successful estimation of \eqref{EqnRobustGGM} on corrupted samples:
\begin{enumerate}[leftmargin=0.1cm, itemindent=1.1cm,label=\textbf{(C-$\bf{4}$)}, ref=\textnormal{(C-$4$)}]
	\item Consider arbitrary local optimum $(\Lparam,\Lw)$. Let $\Lerrt := \Lparam-\Tparam$ and $\Lerrw := \Lw - \Tw$. Then, 
		\begin{align*}
			\Big| \BigdoubleInner{\frac{1}{h}\sum_{i=1}^n \Lerrw_i \xi(\xi)^\top}{\Lerrt} \Big| \leq \ &  \IncCon \|\Lerrt\|_\F \, + \IncConTwo \|\Lerrt\|_1  %- \IncConTwo \|\Lerrt\|_1 \, .
		\end{align*}\label{Con:incoh}
		with some positive quantities $\IncCon$ and $\IncConTwo$ on $n$ and $p$. These will be specified below for some concrete examples.
\end{enumerate}
\ref{Con:incoh} can be understood as a structural incoherence condition between the model parameter $\param$ and the weight parameter $w$. Such a condition is usually imposed when analyzing estimators with multiple parameters (for example, see \cite{Nguyen2011b} for a robust linear regression estimator). Since $\Tw$ is defined depending on $\Lw$, each local optimum has its own \ref{Con:incoh} condition. We will see in the sequel that under some reasonable cases, this condition for any local optimum holds with high probability. Also note that for the case with clean samples, the condition \ref{Con:incoh} is trivially satisfied since $\Lerrw_i =0$ for all $i \in \{1,\hdots,n\}$ and hence the LHS becomes 0. 

Armed with these conditions, we now state our main theorem on the error bounds of our estimator \eqref{EqnRobustGGM}:
\begin{theorem}\label{ThmMain}
	Consider corrupted Gaussian graphical models. Let $(\Lparam,\Lw)$ be an {\bf any} local optimum of $M$-estimator \eqref{EqnRobustGGM}. Suppose that $(\Lparam,\Lw)$ satisfies the condition \ref{Con:incoh}. Suppose also that the regularization parameter $\lambda$ in \eqref{EqnRobustGGM} is set such that
	\begin{align}\label{EqnLambda}
		4\max\left\{\Big\|\frac{1}{h}\sum_{i=1}^n \Tw_i \xi(\xi)^\top - (\Tparam )^{-1}\Big\|_{\infty}\, , \, \IncConTwo \right\} \leq \lambda \leq \frac{\RSCcon - \IncCon}{3R} .
	\end{align}
	Then, this local optimum $(\Lparam,\Lw)$ is guaranteed to be consistent as follows:
	\begin{align}\label{EqnMainBounds}
		&\|\Lparam-\Tparam\|_\F  \leq \frac{1}{\RSCcon}\Big(\frac{3\lambda \sqrt{k+p}}{2} + \IncCon\Big) \quad \text{and} \nonumber\\
		&\offNorm{\Lparam-\Tparam}  \leq \frac{2}{\lambda\, \RSCcon}\Big( 3\lambda \sqrt{k+p} + \IncCon\Big)^2 \, .
	\end{align}
\end{theorem}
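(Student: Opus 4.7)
The plan is to exploit the biconvexity of \eqref{EqnRobustGGM}: because the program is convex in $\param$ for fixed $w$, any local optimum $(\Lparam,\Lw)$ satisfies the $\param$-subgradient variational inequality
\[
\BigdoubleInner{\tfrac{1}{h}\sum_{i=1}^{n} \Lw_i \xi(\xi)^\top - \Lparam^{-1} + \lambda Z}{\Tparam - \Lparam} \;\ge\; 0
\]
for some $Z \in \partial \offNorm{\Lparam}$, with $\Tparam$ feasible since $\|\Tparam\|_1\le R$. Writing $\Lw = \Tw + \Lerrw$ to separate the ``clean'' empirical covariance $\tfrac{1}{h}\sum \Tw_i \xi(\xi)^\top$ from the weight-error correction $\tfrac{1}{h}\sum \Lerrw_i \xi(\xi)^\top$, and inserting $\pm\Tparam^{-1}$, this rearranges to
\[
\BigdoubleInner{\Tparam^{-1} - \Lparam^{-1}}{\Lerrt} \;\le\; \BigdoubleInner{\Tparam^{-1} - \tfrac{1}{h}\sum \Tw_i \xi(\xi)^\top}{\Lerrt} \;-\; \BigdoubleInner{\tfrac{1}{h}\sum \Lerrw_i \xi(\xi)^\top}{\Lerrt} \;-\; \lambda\,\doubleInner{Z}{\Lerrt}.
\]

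I then invoke the three ingredients. The RSC condition \ref{Con:rsc} lower-bounds the LHS by $\RSCcon \|\Lerrt\|_\F^{2}$. The first RHS term is at most $(\lambda/4)\|\Lerrt\|_1$ by H\"older applied together with the lower bound in~\eqref{EqnLambda}. The second is at most $\IncCon\|\Lerrt\|_\F + \IncConTwo\|\Lerrt\|_1 \le \IncCon\|\Lerrt\|_\F + (\lambda/4)\|\Lerrt\|_1$ via the incoherence \ref{Con:incoh} and $\IncConTwo\le\lambda/4$. For the subgradient piece, letting $S$ index the off-diagonal support of $\Tparam$, decomposability of $\offNorm{\cdot}$ with the subgradient inequality gives $-\doubleInner{Z}{\Lerrt}\le\offNorm{\Lerrt_S}-\offNorm{\Lerrt_{S^c}}$. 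Assembling these and expanding $\|\Lerrt\|_1=\offNorm{\Lerrt_S}+\offNorm{\Lerrt_{S^c}}+\diagNorm{\Lerrt}$ produces the master inequality
\[
\RSCcon \|\Lerrt\|_\F^{2} + \tfrac{\lambda}{2}\offNorm{\Lerrt_{S^c}} \;\le\; \tfrac{3\lambda}{2}\bigl(\offNorm{\Lerrt_S} + \diagNorm{\Lerrt}\bigr) + \IncCon \|\Lerrt\|_\F,
\]
where I have absorbed $(\lambda/2)\diagNorm{\Lerrt}\le(3\lambda/2)\diagNorm{\Lerrt}$ to unify the coefficients. Cauchy--Schwarz on the at most $k+p$ support/diagonal coordinates yields $\offNorm{\Lerrt_S}+\diagNorm{\Lerrt}\le\sqrt{k+p}\,\|\Lerrt\|_\F$; dropping the nonnegative $\offNorm{\Lerrt_{S^c}}$ term and dividing by $\|\Lerrt\|_\F$ gives the Frobenius bound.

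For the $\offNorm{\cdot}$ bound I instead discard the $\RSCcon\|\Lerrt\|_\F^{2}$ term in the master inequality, producing the cone inequality $\offNorm{\Lerrt_{S^c}}\le 3\offNorm{\Lerrt_S}+\diagNorm{\Lerrt}+(2\IncCon/\lambda)\|\Lerrt\|_\F$; hence $\offNorm{\Lerrt}\le\bigl(4\sqrt{k+p}+2\IncCon/\lambda\bigr)\|\Lerrt\|_\F$, and plugging in the Frobenius bound (with a mild constant loosening to reach the displayed squared form) finishes the proof. The principal obstacle is the coupling between $\param$ and $w$: only the $\param$-subdifferential condition is clean, so the cross term $\BigdoubleInner{\tfrac{1}{h}\sum \Lerrw_i \xi(\xi)^\top}{\Lerrt}$ must be absorbed through the structural incoherence \ref{Con:incoh}. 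A secondary technicality is that \ref{Con:rsc} is only posited on $\|\errt\|_\F\le 1$; this is reconciled with the side constraint $\|\param\|_1\le R$ and the upper bound $\lambda\le(\RSCcon-\IncCon)/(3R)$ in~\eqref{EqnLambda} by the rescaling/convex-combination trick of~\cite{Loh13}, which keeps the analysis confined to the radius on which \ref{Con:rsc} applies.
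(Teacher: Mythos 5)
Your proposal is correct and follows essentially the same route as the paper's proof: the first-order stationarity condition in $\param$ at the local optimum, the decomposition isolating the clean term $\frac{1}{h}\sum_i \Tw_i \xi(\xi)^\top - (\Tparam)^{-1}$ and the cross term $\frac{1}{h}\sum_i \Lerrw_i \xi(\xi)^\top$, then RSC, \ref{Con:incoh}, H\"older with the choice of $\lambda$, and decomposability of $\offNorm{\cdot}$, followed by the $\sqrt{k+p}$ Cauchy--Schwarz step and the cone inequality for the off-diagonal bound. The ``secondary technicality'' you defer to the rescaling trick of \cite{Loh13} is exactly what the paper spells out in its Lemma~\ref{Lem:BoundDelta} (a contradiction argument using $\|\Lerrt\|_1 \le 2R$ and $\lambda \le (\RSCcon-\IncCon)/(3R)$ to ensure $\|\Lerrt\|_\F \le 1$), so nothing essential is missing.
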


The statement in Theorem \ref{ThmMain} holds deterministically, and the probabilistic statement comes where we show \ref{Con:incoh} and \eqref{EqnLambda} for a given $(\Lparam,\Lw)$ are satisfied. Note that, defining $\Loss(\param,w\big) := \bigdoubleInner{\param}{\frac{1}{h}\sum_{i=1}^n w_i \xi(\xi)^\top} -\log \det (\param)$, it is a standard way of choosing $\lambda$ based on $\|\nabla_{\param} \Loss\big(\Tparam,\Tw\big)\|_\infty$ (see \cite{NRWY12}, for details). Also it is important to note that the term $\sqrt{k+p}$ captures the relation between element-wise $\ell_1$ norm and the error norm $\|\cdot\|_\F$ including \emph{diagonal entries}. Due to the space limit, the proof of Theorem \ref{ThmMain} (and all other proofs) are provided in the Supplements. %\cite{YL15RGGMSup}.

%%%%%%%%%%%%%%%%%%%%%%%%%%%%%%%%%%%%%%%%%%%%%%%%%%%%%%%%%%%%%%
Now, it is natural to ask how easily we can satisfy the conditions in Theorem \ref{ThmMain}. Intuitively it is impossible to recover true parameter by weighting approach as in \eqref{EqnRobustGGM} when the amount of corruptions exceeds that of normal observation errors. 

To this end, suppose that we have some upper bound on the corruptions:
\begin{enumerate}[leftmargin=0.1cm, itemindent=1.35cm,label=\textbf{(C-B1)}, ref=\textnormal{(C-B1)}]
	\item For some function $f(\cdot)$, we have $\big(\matNorm{X^B}_2\big)^2 \leq f(X^B) \sqrt{h \log p}$ \label{Con:1}
\end{enumerate}
where $X^B$ denotes the sub-design matrix in $\reals^{|B|\times p}$ corresponding to outliers. Under this assumption, we can properly choose the regularization parameter $\lambda$ satisfying \eqref{EqnLambda} as follows:
\begin{corollary}\label{Cor1}
	Consider corrupted Gaussian graphical models with conditions \ref{Con:h} and \ref{Con:1}. Suppose that we choose the regularization parameter 
	\begin{align*}
		\lambda = 4 \max \left\{ 8 (\max_i \Sigma^*_{ii}) \sqrt{\frac{10 \tau \log p }{h - |B|}} + \frac{|B|}{h}\| \Sigma^*\|_{\infty} 
\, , \, f(X^B)\sqrt{\frac{\log p}{h}} \right\} \leq \frac{\RSCcon - f(X^B) \sqrt{\frac{|B|\log p}{h}}}{3R} \, .
	\end{align*}
	Then, {\bf any} local optimum of \eqref{EqnRobustGGM} is guaranteed to satisfy \ref{Con:incoh} and have the error bounds in \eqref{EqnMainBounds} with probability at least $1-c_1 \exp(-c_1' h \lambda^2)$ for some universal positive constants $c_1$ and $c'_1$.
\end{corollary}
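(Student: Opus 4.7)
The plan is to verify the two hypotheses of Theorem~\ref{ThmMain}---the structural incoherence condition \ref{Con:incoh} and the admissibility window for $\lambda$ in \eqref{EqnLambda}---with the claimed values $\IncCon = f(X^B)\sqrt{|B|\log p/h}$ and $\IncConTwo = f(X^B)\sqrt{\log p/h}$, and then invoke Theorem~\ref{ThmMain} directly. The whole argument reduces to (i) checking \ref{Con:incoh} for these expressions and (ii) bounding $\big\|\frac{1}{h}\sum_i \Tw_i X^{(i)}(X^{(i)})^\top - (\Tparam)^{-1}\big\|_\infty$ by the first term inside the $\max$ defining $\lambda$; the upper bound on $\lambda$ in \eqref{EqnLambda} then coincides with what is already assumed in the corollary statement.

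For step (i), the key observation is that, by the very definition of $\Tw$, one has $\Lerrw_i = 0$ for every $i\in G$ and $\Lerrw_i = \Lw_i \in [0,1]$ for $i\in B$. Hence
\begin{align*}
M \;:=\; \frac{1}{h}\sum_{i=1}^n \Lerrw_i X^{(i)}(X^{(i)})^\top \;=\; \frac{1}{h}(X^B)^\top W_B X^B
\end{align*}
is PSD with rank at most $|B|$, where $W_B$ is the diagonal matrix carrying $\Lw_i$ for $i\in B$. Since $\matNorm{W_B}_2\le 1$, condition \ref{Con:1} gives $\matNorm{M}_2 \le \frac{1}{h}\matNorm{X^B}_2^2 \le f(X^B)\sqrt{\log p/h}$. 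The rank bound $\|M\|_\F \le \sqrt{|B|}\matNorm{M}_2$ yields $|\doubleInner{M,\Lerrt}| \le \|M\|_\F\|\Lerrt\|_\F \le \IncCon\|\Lerrt\|_\F$, while the PSD property $|M_{ij}|\le \sqrt{M_{ii}M_{jj}} \le \matNorm{M}_2$ combined with H\"older gives $|\doubleInner{M,\Lerrt}|\le \matNorm{M}_2\|\Lerrt\|_1 \le \IncConTwo\|\Lerrt\|_1$; either estimate already certifies \ref{Con:incoh} with the claimed $(\IncCon,\IncConTwo)$.

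For step (ii), since $\Tw_i=0$ on $B$, I would decompose
\begin{align*}
\frac{1}{h}\sum_{i=1}^n \Tw_i X^{(i)}(X^{(i)})^\top - \Sigma^* \;=\; \frac{1}{h}\sum_{i\in G}\Lw_i\big(X^{(i)}(X^{(i)})^\top - \Sigma^*\big) \;+\; \Big(\tfrac{1}{h}\sum_{i\in G}\Lw_i - 1\Big)\Sigma^*.
\end{align*}
The second summand has element-wise $\ell_\infty$ norm at most $\tfrac{|B|}{h}\|\Sigma^*\|_\infty$, since $\sum_{i\in G}\Lw_i = h-\sum_{i\in B}\Lw_i \in [h-|B|,h]$, which matches the $\tfrac{|B|}{h}\|\Sigma^*\|_\infty$ piece of the claimed lower bound. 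For the first summand I would exploit that at any local optimum the $w$-subproblem is linear over a polytope, so $\Lw$ can be taken 0/1 and the sum collapses to $\tfrac{|T|}{h}(\widehat{\Sigma}_T - \Sigma^*)$ for some $T\subseteq G$ with $|T|\in[h-|B|,h]$. A standard Gaussian $\ell_\infty$ concentration inequality gives $\|\widehat{\Sigma}_T-\Sigma^*\|_\infty \le 8(\max_i\Sigma^*_{ii})\sqrt{10\tau\log p/|T|}$ with probability at least $1-4p^{-\tau}$ for each fixed $T$, and a union bound over admissible $T$ can be absorbed into the exponent $c_1'h\lambda^2$ of the final tail, delivering the first piece of the claimed lower bound on $\lambda$.

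The main obstacle will be precisely this uniform concentration in step (ii): because $\Lw$ is data-dependent, one cannot invoke a single oracle sample-covariance bound, and must either carefully control the combinatorial union over 0/1 weight configurations with $\mathbf{1}^\top w = h$, or leverage the first-order optimality condition that pins the active set to the $h$ observations of smallest negative log-likelihood under $\Lparam$. Once that uniform deviation inequality is available, substituting the verified $\IncCon,\IncConTwo$ and the admissible $\lambda$ into Theorem~\ref{ThmMain} and collecting the tail probabilities yields the error bounds \eqref{EqnMainBounds} with probability at least $1-c_1\exp(-c_1'h\lambda^2)$, as stated.
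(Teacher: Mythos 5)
Your overall architecture is the same as the paper's: verify \ref{Con:incoh} with $\IncCon = f(X^B)\sqrt{|B|\log p/h}$ and $\IncConTwo = f(X^B)\sqrt{\log p/h}$, bound $\big\|\frac{1}{h}\sum_i \Tw_i X^{(i)}(X^{(i)})^\top - (\Tparam)^{-1}\big\|_{\infty}$ by the first term inside the max, and invoke Theorem~\ref{ThmMain}. Your step (i), however, takes a genuinely different and cleaner route. The paper follows \cite{Nguyen2011b}: it splits the entries of $\Lerrt$ into blocks $S_1,\dots,S_q$ of size $|B|$, controls each block through $\max_j \matNorm{X^B_{S_j}}_2^2$ via trace/Frobenius manipulations, and uses the shelling bound of \cite{CanRomTao06}, $\sum_j \|[\Lerrt]_{S_j}\|_\F \le \|\Lerrt\|_\F + |B|^{-1/2}\|\Lerrt\|_1$, to land on exactly the same constants. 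Your argument---writing $M=\frac{1}{h}(X^B)^\top W_B X^B$, noting $M$ is PSD of rank at most $|B|$ with $\matNorm{M}_2 \le \frac{1}{h}\matNorm{X^B}_2^2 \le f(X^B)\sqrt{\log p/h}$ by \ref{Con:1}, and then using either $\|M\|_\F \le \sqrt{|B|}\,\matNorm{M}_2$ with Cauchy--Schwarz or $\|M\|_{\infty}\le \matNorm{M}_2$ with H\"older---certifies \ref{Con:incoh} with the same $(\IncCon,\IncConTwo)$ (and in fact in a stronger ``either term alone suffices'' form), avoiding the block decomposition entirely. Your step (ii) decomposition is identical to the paper's: the weight-deficit term yields $\frac{|B|}{h}\|\Sigma^*\|_{\infty}$ and the retained good samples yield the $\sqrt{\log p/(h-|B|)}$ term via Lemma~\ref{Lem:SampleCov}.

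One caveat concerns the ``main obstacle'' you raise in step (ii). The paper does not carry out any uniformity argument: it applies Lemma~\ref{Lem:SampleCov} directly to the data-dependent subset $\widebar{G}=\{i \in G: \Tw_i=1\}$, which is consistent with its framing (stated after Theorem~\ref{ThmMain} and in the remark following Corollary~\ref{Cor2}) that the probabilistic claim is for a \emph{given} local optimum $(\Lparam,\Lw)$, with a union bound over several local optima left to the reader. Your instinct that a statement uniform over all local optima requires more is sound, but your proposed fix does not go through as written: the number of admissible $0/1$ weight configurations is of order $\binom{h}{|B|} = \exp\big(\Theta(|B|\log(h/|B|))\big)$, which cannot be absorbed into a tail of size $\exp(-c_1' h\lambda^2)\asymp p^{-c}$ when, say, $|B|\asymp \sqrt{n}$, unless $\lambda$ (equivalently $\tau$) is inflated, which would degrade the rates in Corollary~\ref{Cor2}. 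To reproduce the paper's statement you should simply condition on the given local optimum as the paper does; genuine uniformity would require a quantitatively different bound (e.g., one holding simultaneously over all subsets of $G$ of size at least $h-|B|$, at the price of an additional $\sqrt{|B|\log(h/|B|)/h}$ term in $\lambda$).
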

If we further assume the number of corrupted samples scales with $\sqrt{n}$ at most : 
\begin{enumerate}[leftmargin=0.1cm, itemindent=1.35cm,label=\textbf{(C-B2)}, ref=\textnormal{(C-B2)}]
	\item $|B| \leq a\sqrt{n}$ for some constant $a\geq0$, \label{Con:2}
\end{enumerate}
then we can derive the following result as another corollary of Theorem \ref{ThmMain}:
\begin{corollary}\label{Cor2}
	Consider corrupted Gaussian graphical models. Suppose that the conditions \ref{Con:h}, \ref{Con:1} and \ref{Con:2} hold. Also suppose that the regularization parameter $\lambda$ is set as $c\sqrt{\frac{\log p}{n}}$ where $c :=  4\max \big\{16 (\max_i \Sigma^*_{ii}) \sqrt{5 \tau} + \frac{2a\| \Sigma^*\|_{\infty}}{\sqrt{\log p}} \, , \, \sqrt{2} f(X^B) \big\}$. Then, if the sample size $n$ is lower bounded as 
	\begin{align*}
		n \geq \max \left\{16 a^2 \, , \, \big(\matNorm{\Tparam}_2 + 1\big)^{4} \Big(3Rc + f(X^B)\sqrt{2|B|} \Big)^2(\log p) \right\} \, , 
	\end{align*}
	then {\bf any} local optimum of \eqref{EqnRobustGGM} is guaranteed to satisfy \ref{Con:incoh} and have the following error bound:
	\begin{align}\label{EqnResultCor2}
		&\|\Lparam-\Tparam\|_\F  \leq \frac{1}{\RSCcon}\left(\frac{3c}{2} \sqrt{\frac{(k+p)\log p}{n}} + f(X^B) \sqrt{\frac{2|B|\log p}{n}} \right) \, 
	\end{align}
	with probability at least $1-c_1 \exp(-c_1' h \lambda^2)$ for some universal positive constants $c_1$ and $c'_1$.
\end{corollary}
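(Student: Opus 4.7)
The plan is to derive Corollary 2 as a direct algebraic consequence of Corollary 1, translating every quantity from the $h$-scale to the $n$-scale using assumption \ref{Con:2} together with the sample-size hypothesis $n \geq 16a^2$. Combining these two gives the elementary inequalities $|B| \leq a\sqrt{n} \leq n/4$, $h = n-|B| \geq 3n/4$, and $h - |B| \geq n - 2|B| \geq n/2$. Consequently $\sqrt{\log p / h} \leq \sqrt{4\log p/(3n)}$, $\sqrt{\log p/(h-|B|)} \leq \sqrt{2\log p/n}$, and $|B|/h \leq 4a/(3\sqrt{n})$, which let me rewrite every expression appearing in Corollary 1 on the $\sqrt{\log p/n}$ scale.

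With these substitutions in hand, the first step is to verify that $\lambda = c\sqrt{\log p/n}$ satisfies Corollary 1's lower bound by comparing the two quantities inside the max term by term. The summand $16\sqrt{5\tau}(\max_i \Sigma^*_{ii})$ in $c$ absorbs $8(\max_i \Sigma^*_{ii})\sqrt{10\tau \log p/(h-|B|)}$ via $\sqrt{10/(h-|B|)} \leq \sqrt{20/n}$; the summand $2a\|\Sigma^*\|_\infty/\sqrt{\log p}$ absorbs $(|B|/h)\|\Sigma^*\|_\infty$ because $|B|/h \leq 4a/(3\sqrt{n})$ and $8 \geq 16/3$; and the alternative argument $\sqrt{2} f(X^B)$ absorbs $f(X^B)\sqrt{\log p/h}$ because $\sqrt{2} \geq \sqrt{4/3}$. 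For Corollary 1's upper bound, I rewrite the condition equivalently as $3R\lambda + f(X^B)\sqrt{|B|\log p/h} \leq \RSCcon$, apply $\sqrt{|B|\log p/h} \leq \sqrt{2|B|\log p/n}$, substitute $\lambda = c\sqrt{\log p/n}$, and square to obtain $n \geq \RSCcon^{-2}[3Rc + f(X^B)\sqrt{2|B|}]^2 \log p$. Invoking Lemma \ref{Lem:RSC} to replace $\RSCcon^{-2}$ by $(\matNorm{\Tparam}_2 + 1)^4$ gives exactly the second term in the max that lower-bounds $n$ in the Corollary 2 statement.

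Having established both sides of the $\lambda$-inequality in \eqref{EqnLambda}, Corollary 1 applies verbatim: \ref{Con:incoh} holds, and Theorem \ref{ThmMain} yields the Frobenius-norm bound $\|\Lparam - \Tparam\|_\F \leq \RSCcon^{-1}[3\lambda\sqrt{k+p}/2 + \IncCon]$. Plugging in $\lambda = c\sqrt{\log p/n}$ together with $\IncCon \leq f(X^B)\sqrt{|B|\log p/h} \leq f(X^B)\sqrt{2|B|\log p/n}$ gives precisely \eqref{EqnResultCor2}, and the probability statement $1 - c_1 \exp(-c_1' h \lambda^2)$ is inherited directly from Corollary 1. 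Since the argument is an algebraic reduction to Corollary 1, there is no genuine probabilistic obstacle; the only care required is constant bookkeeping to guarantee that the abstract constant $c$ swallows the factors $\sqrt{4/3}$, $\sqrt{2}$, and $4/3$ introduced by replacing $h$ and $h - |B|$ with $n$.
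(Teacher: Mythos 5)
Your proposal is correct and follows essentially the same route as the paper: reduce to Corollary~\ref{Cor1} by using \ref{Con:2} and $n\geq 16a^2$ to get $h-|B|\geq n/2$ and $|B|/h\lesssim a/\sqrt{n}$, check both sides of the condition \eqref{EqnLambda} for $\lambda=c\sqrt{\log p/n}$ (with the squaring step and $\RSCcon=(\matNorm{\Tparam}_2+1)^{-2}$ giving the stated sample-size requirement), and then substitute $\lambda$ and $\IncCon\leq f(X^B)\sqrt{2|B|\log p/n}$ into the Frobenius bound of Theorem~\ref{ThmMain}. The only difference is cosmetic constant bookkeeping (you use the slightly sharper $h\geq 3n/4$ where the paper uses $h\geq n/2$), which does not change the argument.
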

Note that the $\offNorm{\cdot}$ norm based error bound also can be easily derived using the selection of $\lambda$ from \eqref{EqnMainBounds}. Corollary \ref{Cor2} reveals an interesting result: even when $O(\sqrt{n})$ samples out of total $n$ samples are corrupted, our estimator \eqref{EqnRobustGGM} can successfully recover the true parameter with guaranteed error in \eqref{EqnResultCor2}. The first term in this bound is $O\Big(\sqrt{\frac{(k+p)\log p}{n}}\Big)$ which exactly recovers the Frobenius error bound for the case without outliers (see \cite{RWRY11,Loh13} for example). Due to the outliers, we have the performance degrade with the second term, which is $O\Big(\sqrt{\frac{|B|\log p}{n}}\Big)$. To the best of our knowledge, this is the first statistical error bounds on the parameter estimation for Gaussian graphical models with outliers. 
Also note that Corollary 1 only concerns on any local optimal point derived by an arbitrary optimization algorithm. For the guarantees of multiple local optima simultaneously, we may use a union bound from the corollary.

%%%%%%%%%%%%%%%%%%%%%%%%%%%%%%%%%%%%%%%%%%%%%%%%%%%%%%%%%%%%%%
\paragraph{When Outliers Follow a Gaussian Graphical Model} Now let us provide a concrete example and show how $f(X^B)$ in \ref{Con:1} is precisely specified in this case: 
\begin{enumerate}[leftmargin=0.1cm, itemindent=1.35cm,label=\textbf{(C-B3)}, ref=\textnormal{(C-B3)}]
	\item Outliers in the set $B$ are drawn from another Gaussian graphical model \eqref{EqnGRF} with a parameter $(\SigB)^{-1}$. \label{Con:3}
\end{enumerate}
This can be understood as the Gaussian mixture model where the most of the samples are drawn from $(\Tparam)^{-1}$ that we want to estimate, and small portion of samples are drawn from $\SigB$. In this case, Corollary \ref{Cor2} can be further shaped as follows:
\begin{corollary}\label{Cor3}
	Suppose that the conditions \ref{Con:h}, \ref{Con:2} and \ref{Con:3} hold. Then the statement in Corollary \ref{Cor2} holds with $f(X^B) := \frac{4\sqrt{2} a \big(1+\sqrt{\log p}\big)^2 \matNorm{\SigB}_2}{\sqrt{\log p}}$. 
\end{corollary}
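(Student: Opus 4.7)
The plan is to specialize Corollary~\ref{Cor2} by verifying that the high-probability bound in condition \ref{Con:1} holds with the explicit choice
$f(X^B) = \frac{4\sqrt{2}\,a\,(1+\sqrt{\log p})^2\,\matNorm{\SigB}_2}{\sqrt{\log p}}$. Once this is done, no further work on \ref{Con:incoh} or on the error bound is needed: the statement of Corollary~\ref{Cor3} is simply the conclusion of Corollary~\ref{Cor2} with this $f(X^B)$ substituted in.

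To bound $\matNorm{X^B}_2$ I would exploit \ref{Con:3}: since the $|B|$ rows of $X^B \in \reals^{|B|\times p}$ are i.i.d.\ $N(0,\SigB)$, we can factor $X^B = Z\,\SigB^{1/2}$ where $Z\in\reals^{|B|\times p}$ has independent standard normal entries, so $\matNorm{X^B}_2 \leq \sqrt{\matNorm{\SigB}_2}\,\matNorm{Z}_2$. The standard Davidson--Szarek concentration inequality for Gaussian random matrices gives $\matNorm{Z}_2 \leq \sqrt{|B|}+\sqrt{p}+t$ with probability at least $1-2\exp(-t^2/2)$. Choosing $t=\sqrt{\log p}$ then yields
\begin{equation*}
\matNorm{X^B}_2^2 \;\leq\; \matNorm{\SigB}_2\,\big(\sqrt{|B|}+\sqrt{p}+\sqrt{\log p}\big)^2
\end{equation*}
on an event of probability at least $1-2/\sqrt{p}$, which can be absorbed into the $1-c_1\exp(-c_1'h\lambda^2)$ probability tracked by Corollary~\ref{Cor2}.

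The remaining step is to show that the right-hand side is at most $f(X^B)\sqrt{h\log p} = 4\sqrt{2}\,a\,(1+\sqrt{\log p})^2\,\matNorm{\SigB}_2\,\sqrt{h}$. I would apply \ref{Con:2} to replace $|B|$ by $a\sqrt{n}$, use \ref{Con:h} to relate $\sqrt{n}$ and $\sqrt{h}$, and invoke the sample-size lower bound from Corollary~\ref{Cor2} so that $\sqrt{p}$ is controlled by $\sqrt{a}\,n^{1/4}(1+\sqrt{\log p})$; the residual $\sqrt{\log p}$ contribution is swept into the same $(1+\sqrt{\log p})$ factor. Squaring and collecting yields exactly the target form with the universal constant $4\sqrt{2}$. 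The main obstacle is precisely this last bookkeeping: the three cross-terms coming from $(\sqrt{|B|}+\sqrt{p}+\sqrt{\log p})^2$ must all be pushed into a clean $a(1+\sqrt{\log p})^2\sqrt{h}$ envelope using only the hypotheses \ref{Con:h}, \ref{Con:2}, and the sample-size condition of Corollary~\ref{Cor2}. With \ref{Con:1} verified in this explicit form, Corollary~\ref{Cor2} applies verbatim and delivers both the satisfaction of \ref{Con:incoh} and the stated parameter-recovery bound, completing the proof.
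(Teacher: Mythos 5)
Your overall strategy (verify the spectral-norm bound with the stated $f(X^B)$, then invoke Corollary~\ref{Cor2} verbatim) matches the paper's, but the object you bound is the wrong one, and the step where you control $\sqrt{p}$ is a genuine gap. The paper does not bound the full spectral norm $\matNorm{X^B}_2$ of the $|B|\times p$ matrix; it bounds $\max_j \matNorm{X^B_{S_j}}_2$, the maximum over the $|B|\times|B|$ column submatrices $X^B_{S_j}$ that actually appear in the proof of Corollary~\ref{Cor1} (this weaker quantity is all that the chain of corollaries uses, since $\matNorm{X^B_{S_j}}_2\le\matNorm{X^B}_2$). For such submatrices the Gaussian operator-norm bound scales like $\sqrt{\matNorm{\SigB}_2}\,(2\sqrt{|B|}+s)$ with no $\sqrt{p}$ term; a union bound over the ${p\choose |B|}\le p^{|B|}$ column subsets forces $s\sim\sqrt{|B|\log p}$, which is exactly where the $(1+\sqrt{\log p})^2$ factor comes from, and \ref{Con:2} together with $n\le 2h$ (from \ref{Con:h}) turns $|B|\le a\sqrt{n}$ into the stated $f(X^B)$.

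Your route bounds $\matNorm{X^B}_2$ via Davidson--Szarek and therefore unavoidably carries a $\sqrt{p}$ term: with \ref{Con:3} and, say, $\SigB=I_p$, one has $\matNorm{X^B}_2^2\asymp(\sqrt{|B|}+\sqrt{p})^2\gtrsim p$, whereas the target $f(X^B)\sqrt{h\log p}=4\sqrt{2}\,a(1+\sqrt{\log p})^2\matNorm{\SigB}_2\sqrt{h}\asymp a\sqrt{n}\,\log p$. In the high-dimensional regime the paper allows ($p$ possibly much larger than $\sqrt{n}\log p$), the literal inequality \ref{Con:1} with this $f(X^B)$ is simply false, so no bookkeeping can rescue it. In particular, your claim that the sample-size lower bound of Corollary~\ref{Cor2} lets you control $\sqrt{p}$ by $\sqrt{a}\,n^{1/4}(1+\sqrt{\log p})$ is unsupported: that condition lower-bounds $n$ in terms of $\log p$ (and of quantities that themselves involve $f(X^B)$, making such a use circular), and none of \ref{Con:h}, \ref{Con:2}, \ref{Con:3} gives an upper bound on $p$ in terms of $n$. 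The fix is to prove the bound for $\max_j\matNorm{X^B_{S_j}}_2$ as the paper does, replacing the $\sqrt{p}$ dimension term by $\sqrt{|B|}$ at the price of the ${p\choose |B|}$ union bound.
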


\section{Experiments}
%\input{table1}
%\input{table2}
%%
%\begin{figure*}[t]
%		%\vspace{-1cm}
%		\centering
%		\begin{tabular}{ccc}
%		\includegraphics[width=0.3\textwidth]{Figures/ROC10vn}&\hspace{1cm} &\includegraphics[width=0.3\textwidth]{Figures/ROC30vn}
%%&\includegraphics[width=0.3\textwidth]{Figures/}\\
%%\includegraphics[width=0.3\textwidth]{Figures/}&\includegraphics[width=0.3\textwidth]{Figures/}
%%&\includegraphics[width=0.3\textwidth]{Figures/}
%\end{tabular}
%		%\vspace{-0.2in}
%		%\vspace{-1.5cm}
%		\caption{ROC curves for the Sparse-LTS, Sparse-LTS-Ada, Extended Lasso, LAD-Lasso, ROMP and Lasso estimators under sparse linear models with 10\% (left) and 30\% (right) of contaminated samples under contamination scenario (S1)}
%		\label{fig:ROC1}
%	\end{figure*}
%	
%	\begin{figure*}[t]
%		%\vspace{-1cm}
%		\centering
%		\begin{tabular}{ccc}
%		\includegraphics[width=0.3\textwidth]{Figures/ROC10c2n}&\hspace{1cm} &\includegraphics[width=0.3\textwidth]{Figures/ROC30c2n}
%%&\includegraphics[width=0.3\textwidth]{Figures/}\\
%%\includegraphics[width=0.3\textwidth]{Figures/}&\includegraphics[width=0.3\textwidth]{Figures/}
%%&\includegraphics[width=0.3\textwidth]{Figures/}
%\end{tabular}
%		%\vspace{-0.2in}
%		%\vspace{-1.5cm}
%		\caption{ROC curves for the Sparse-LTS, Sparse-LTS-Ada, Extended Lasso, LAD-Lasso, ROMP and Lasso estimators under sparse linear models with 10\% (left) and 30\% (right) of contaminated samples under contamination scenario (S2)}
%		\label{fig:ROC2}
%	\end{figure*}
%%

	\begin{figure}[t]
		%\vspace{-1cm}
		\addtolength{\subfigcapskip}{-0.08in}
		\centering
		\subfigure[M1]{\includegraphics[width=0.48\textwidth]{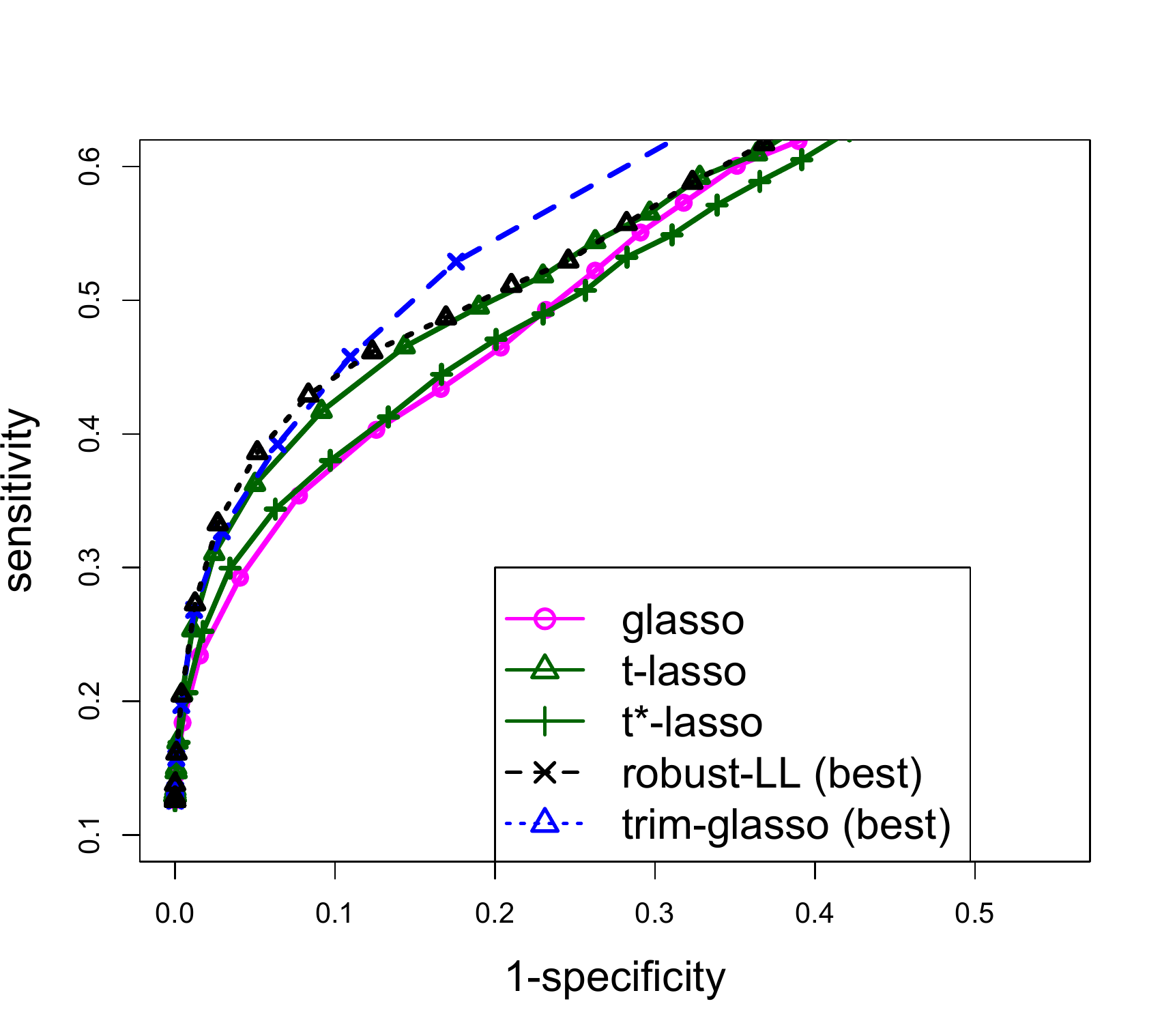}}
		\subfigure[M2]{\includegraphics[width=0.48\textwidth]{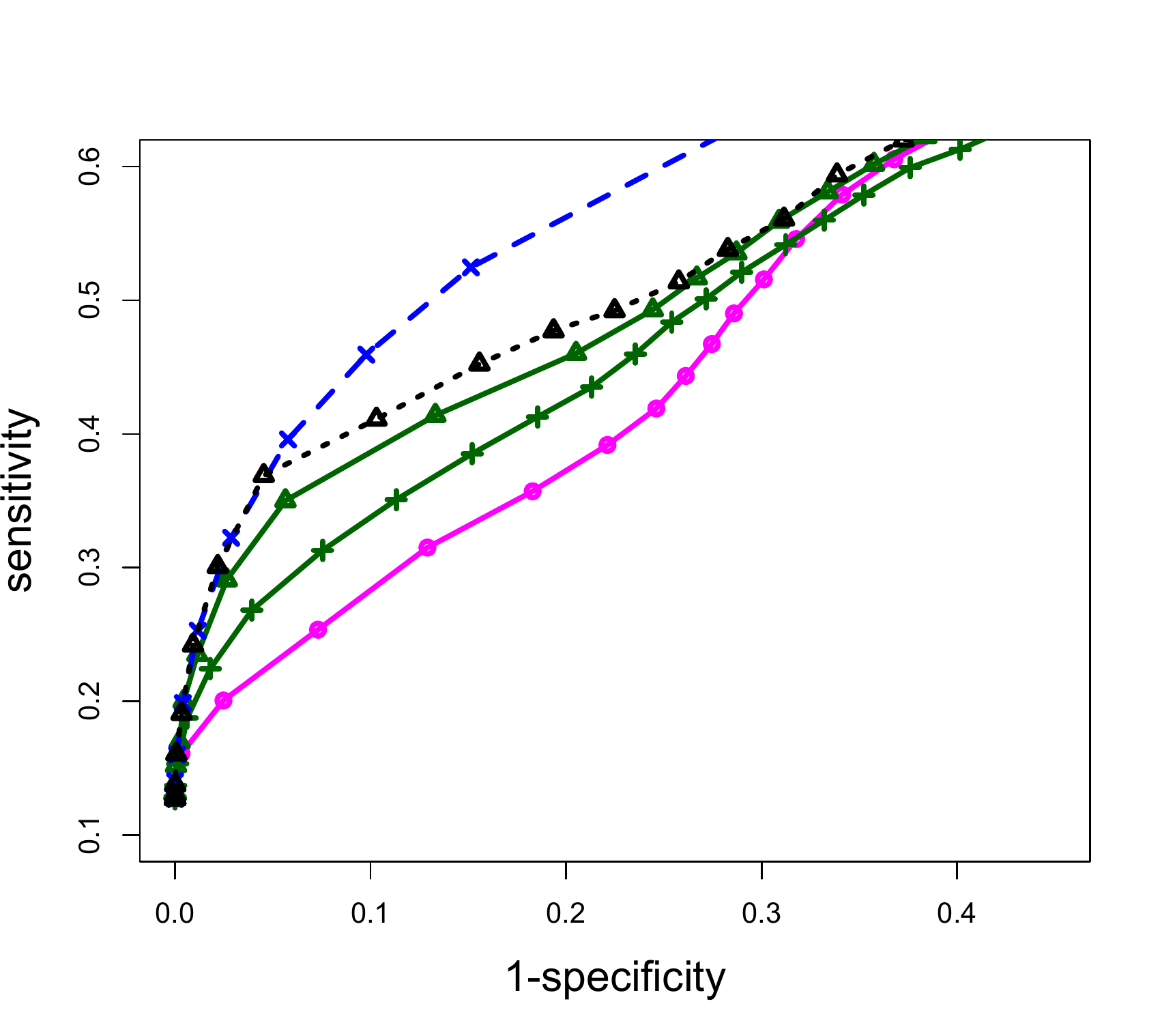}}\\
	 	\vspace{-.9cm}\subfigure[M3]{\includegraphics[width=0.48\textwidth]{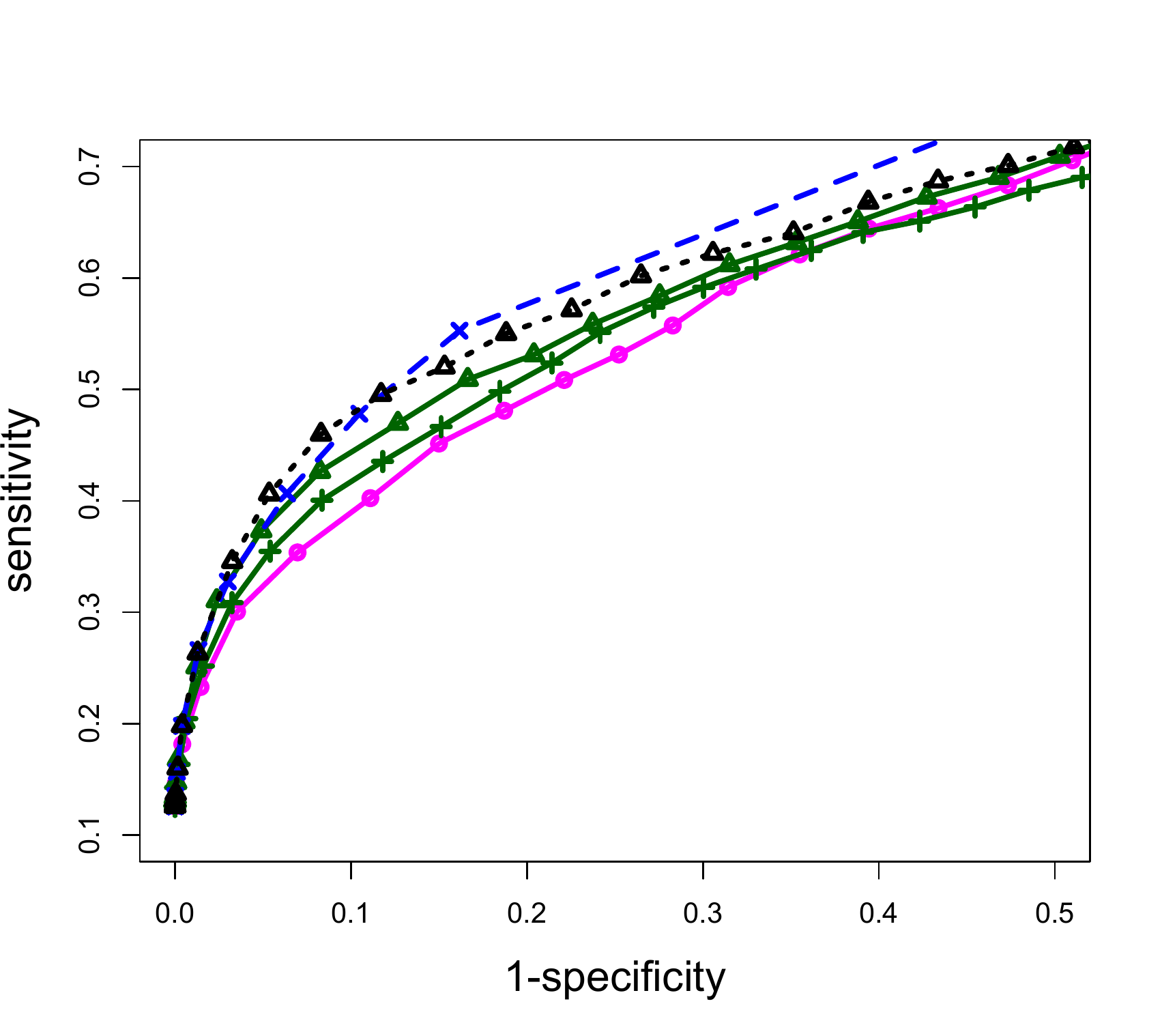}}
		\subfigure[M4]{\includegraphics[width=0.48\textwidth]{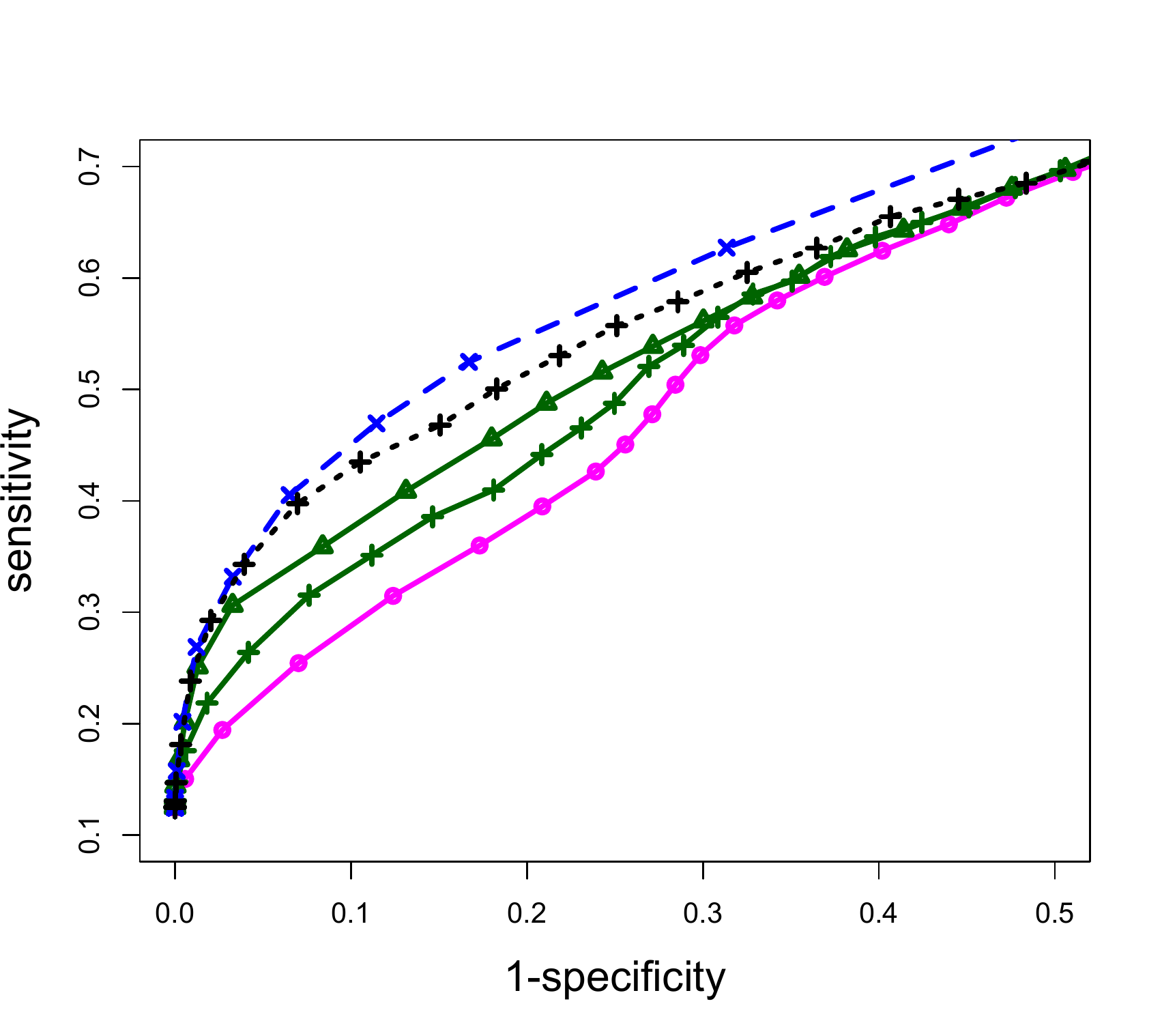}}
		%\vspace{-0.2in}
		\vspace{-.3cm}
		\caption{Average ROC curves for the comparison methods for contamination scenarios M1-M4.}
		\label{fig:roc1}
		\vspace{-.3cm}
	\end{figure}
%	\begin{figure}[t]
%		%\vspace{-1cm}
%		\centering
%		\begin{tabular}{ccc}
%		\includegraphics[width=0.5\textwidth]{} &\includegraphics[width=0.5\textwidth]{}\\
%		M1 & M2\\
%		\includegraphics[width=0.5\textwidth]{}&\includegraphics[width=0.5\textwidth]{}\\
%		M3 & M4\\
%		%\includegraphics[width=0.5\textwidth]{}&\\
%		%M5 & \\%&\includegraphics[width=0.3\textwidth]{Figures/}
%\end{tabular}
%		%\vspace{-0.2in}
%		%\vspace{-1.5cm}
%		\caption{Average ROC curves for the comparison methods for contamination scenarios M1-M4.}
%		\label{fig:roc1}
%	\end{figure}

In this section we corroborate the performance of our Trimmed Graphical Lasso (\emph{trim-glasso}) algorithm on simulated data. We compare against \emph{glasso}: the vanilla Graphical Lasso~\cite{FriedHasTib2007};  the  \emph{t-Lasso} and  \emph{t*-lasso} methods~\cite{drton2011}, and  \emph{robust-LL}: the robustified-likelihood approach of~\cite{sun2012}.

\subsection{Simulated data}
Our simulation setup is similar to~\cite{sun2012} and is a akin to gene regulatory networks. Namely we consider four different scenarios where the outliers are generated from models with different graphical structures. Specifically, each sample is generated from the following mixture distribution:
\[
y_k \sim (1-p_0)N_p(0,\param^{-1}) + \frac{p_0}{2} N_p(-\mu, \param_o^{-1}) + \frac{p_0}{2} N_p(\mu, \param_o^{-1}),~~k=1,\ldots,n,
\]
where $p_o=0.1, n=100,$ and $p=150$. Four different outlier distributions are considered:
\begin{itemize}
	\item[] M1: $\mu=(1,\ldots,1)^T, \param_o=\tilde\param$, \quad  M2: $\mu=(1.5,\ldots,1.5)^T, \param_o=\tilde\param$,
	\item[] M3: $\mu=(1,\ldots,1)^T, \param_o=I_p$, \quad M4: $\mu=(1.5,\ldots,1.5)^T, \param_o=I_p$.
\end{itemize}
We also consider the scenario where the outliers are not symmetric about the mean and simulate data from the following model
\begin{itemize}
	\item[] M5: $y_k \sim (1-p_o) N_p(0,\param^{-1}) + p_o N_p(2,I_p),~~k=1,\ldots,n.$
\end{itemize}

For each simulation run, $\param$ is a randomly generated precision matrix corresponding to a network with $9$ hub nodes simulated as follows. Let $A$ be the adjacency of the network. For all $i<j$ we set $A_{ij} = 1$ with probability 0.03, and zero otherwise. We set $A_{ji} = A_{ij}.$  We then randomly select 9 hub nodes and set the elements of the corresponding rows and columns of $A$ to one with probability 0.4 and zero otherwise.  Using $A$, the simulated nonzero coefficients of the precision matrix are sampled as follows. First we create a matrix $E$ so that $E_{i,j}=0$ if $A_{i,j}=0$, and $E_{i,j}$ is sampled uniformly from $[-0.75,-0.23] \cup [0.25,0.75]$ if $A_{i,j}\neq 0.$ Then we set $E=\frac{E + E^T}{2}.$ Finally we set $\param= E  +(0.1 - \Lambda_{\min}(E))I_p,$ where $\Lambda_{\min}(E)$ is the smallest eigenvalue of $E.$ 
$\tilde\param$  is a randomly generated precision matrix in the same way $\param$ is generated. 

For the  robustness parameter $\beta$ of the \emph{robust-LL} method, we consider $\beta \in \{ 0.005, 0.01, 0.02,0.03\}$ as recommended in~\cite{sun2012}. For the \emph{trim-glasso} method we consider  $\frac{100 h}{n}\in \{90, 85, 80\}.$ Since all the robust comparison methods converge to a stationary point, we tested various initialization strategies for the concentration matrix, including $I_p$, $(S + \lambda I_p)^{-1}$ and the estimate from glasso. We did not observe any noticeable impact on the results.

Figure~\ref{fig:roc1} presents the average ROC curves of the comparison methods over 100 simulation data sets for scenarios M1-M4 as the tuning parameter $\lambda$ varies. In the figure, for \emph{robust-LL} and \emph{trim-glasso} methods, we depict the best curves with respect to parameter $\beta$ and $h$ respectively. Due to space constraints, the detailed results for all the values of $\beta$ and $h$ considered, as well as the results for model M5 are provided in the Supplements. %\cite{YL15RGGMSup}.

From the ROC curves we can see that our proposed approach is competitive compared the alternative robust approaches \emph{t-lasso}, \emph{t*-lasso} and \emph{robust-LL.} The edge over glasso is even more pronounced for scenarios M2, M4 and M5. Surprisingly, \emph{trim-glasso} with $h/n=80\%$ achieves superior sensitivity for nearly any specificity.

%We also report the performance fo the methods when the tuning parameter $\lambda$ is chosen via 5-fold cross validation.

Computationally the \emph{trim-glasso} method is also competitive compared to alternatives. The average run-time over the path of tuning parameters $\lambda$ is 45.78s for \emph{t-lasso}, 22.14s for \emph{t*-lasso}, 11.06s  for  \emph{robust-LL}, 1.58s for trimmed lasso, 1.04s for glasso. Experiments were run on R in a single computing node with a Intel Core i5 2.5GHz CPU and 8G memory. For  \emph{t-lasso},  \emph{t*-lasso} and \emph{robust-LL} we used the R implementations provided by the methods' authors. For \emph{glasso} we used the \emph{glassopath} package. % It is clear that our proposed robust method still recovers many more true edges than glasso for the same false positive rates even when the Markov structures are blurred by the outliers. Particularly, the sensitivities from the robust procedures with ? = 0.02 and 0.03 are almost twice higher than those from glasso for models VI and VIII. 

\subsection{Application to the analysis of Yeast Gene Expression Data}

We analyze a yeast microarray dataset generated by~\cite{brem2005landscape}. The dataset concerns $n=112$ yeast segregants (instances). We focused on $p=126$ genes (variables) belonging to cell-cycle pathway as provided by the KEGG database~\cite{kegg}.  For each of these genes we standardize the gene expression data to zero-mean and unit standard deviation. We observed that the expression levels of some genes are clearly not symmetric about their means and might include outliers. For example
the histogram of gene \emph{ORC3} is presented in Figure~\ref{fig:hist}(a). 
\begin{figure}[t]
		%\vspace{-1cm}
		\centering
		\begin{tabular}{cc}

\includegraphics[width=0.45\textwidth]{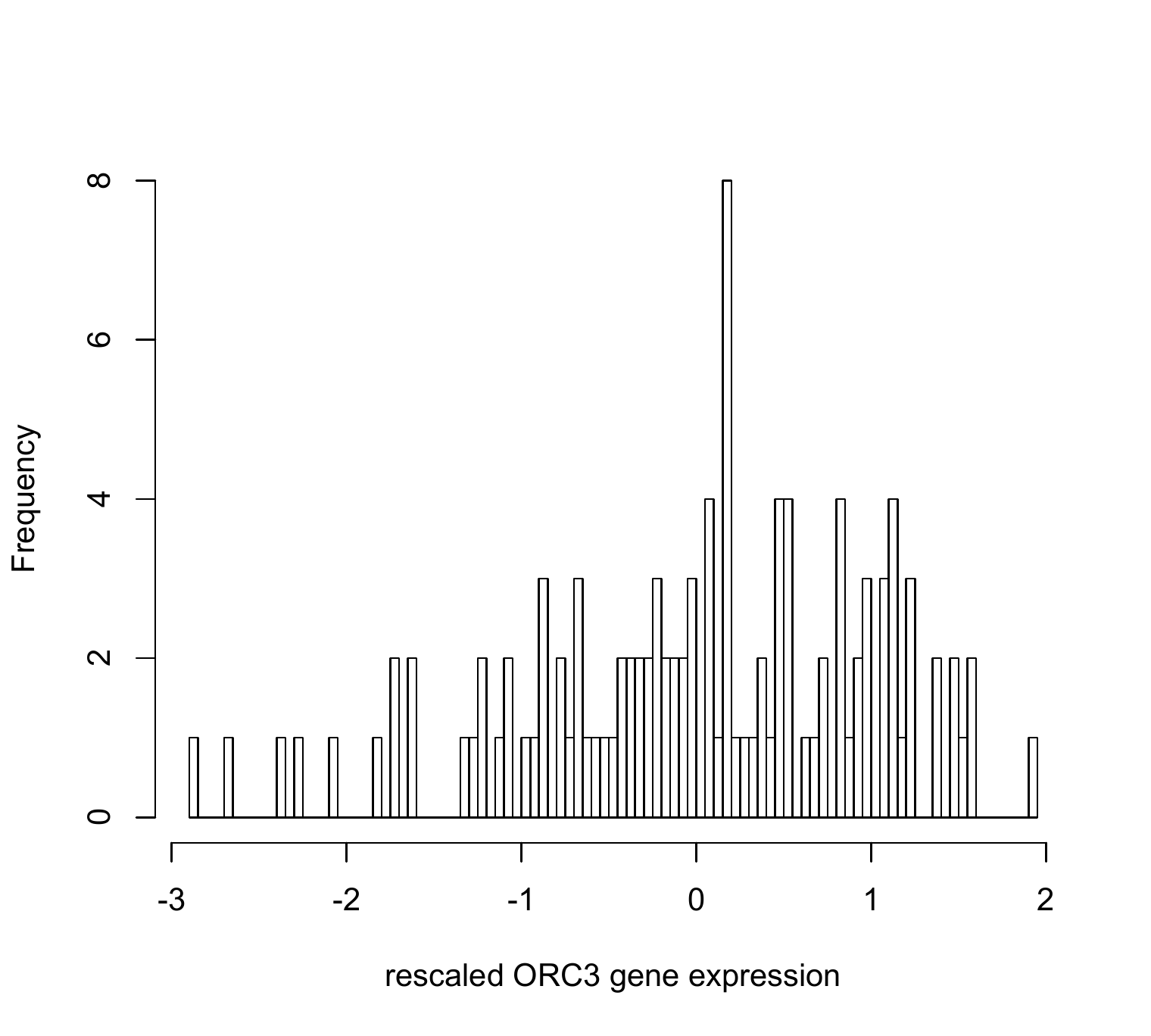} & \includegraphics[width=0.45\textwidth]{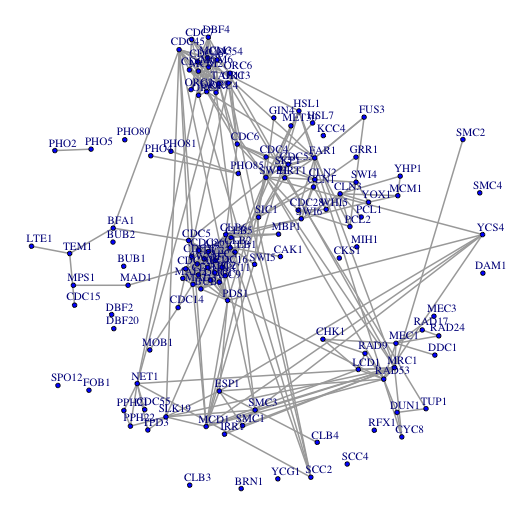}
\end{tabular}
		\vspace{-0.15in}
		%\vspace{-1.5cm}
		%\caption{(a) Histogram of standardized gene expression levels for genes \emph{NET1} and \emph{ORC3}.  (b) Network estimated by \emph{trim-glasso}}
		\caption{(a) Histogram of standardized gene expression levels for gene \emph{ORC3}.  (b) Network estimated by \emph{trim-glasso}}
		\label{fig:hist}
		\vspace{-0.1in}
	\end{figure}
	For the \emph{robust-LL} method we set $\beta=0.05$  and for \emph{trim-glasso} we use $h/n=80\%.$
We use 5-fold-CV to choose the tuning parameters for each method. After $\lambda$ is chosen for each method, we rerun the methods using the full dataset to obtain the final precision matrix estimates. 

Figure~\ref{fig:hist}(b) shows the cell-cycle pathway estimated by our proposed  method. For comparison the  cell-cycle pathway from the KEGG~\cite{kegg} is provided in the Supplements. %\cite{YL15RGGMSup}.
It is important to note that the KEGG graph corresponds to what is currently known about the pathway. It should \emph{not} be treated as the ground truth. Certain discrepancies between KEGG and estimated graphs may also be caused by inherent limitations in the dataset used for modeling. For instance, some edges in cell-cycle pathway may not be observable from gene expression data. Additionally, the perturbation of cellular systems might not be strong enough to enable accurate inference of some of the links. 

%Nevertheless we provide the $F_1$ scores (harmonic mean between  precision and recall) for the various comparison methods in table~\ref{ta::yeast}. 
%\begin{table}[h!]
%	\small
%	\vspace{-.4cm}
%	\caption{$F_1$ scores (the higher the better) for the comparison methods on uncovering the yeast cell-cycle pathway} \label{ta::yeast}
%	\centering
%\begin{tabular}{c|cccc}
%\hline
%Method & glasso & t*-lasso & robust-NLL ($\beta=0.05$)& trim-glasso($h/n=80\%$)\\
%$F_1$ score & 0.23 &  0.37 & 0.39 &  0.41 \\
%\hline
%\end{tabular}
%\end{table}
\emph{glasso} tends to estimate more links than the robust methods. We postulate that the lack of robustness might result in inaccurate network reconstruction and the identification of spurious links.  Robust methods tend to estimate networks that are more consistent with that from the KEGG ($F_1$-score of 0.23 for \emph{glasso}, 0.37 for \emph{t*-lasso}, 0.39 for \emph{robust-NLL} and 0.41 for \emph{trim-glasso}, where the $F_1$ score is the harmonic mean between precision and recall). For instance our approach recovers several characteristics of the KEGG pathway. For instance, genes \emph{CDC6}  (a key regulator of DNA replication playing important roles in the activation and maintenance of the checkpoint mechanisms coordinating S phase and mitosis) and \emph{PDS1} (essential gene for meiotic progression and mitotic cell cycle arrest) are identified as a hub genes, while genes \emph{CLB3,BRN1,YCG1} are unconnected to any other genes.

%\subsubsection*{Acknowledgments}
%
%Use unnumbered third level headings for the acknowledgments. All
%acknowledgments go at the end of the paper. Do not include 
%acknowledgments in the anonymized submission, only in the 
%final paper. 

%%%%%%%%%%%%%%%%%%%%
%\newpage
{\small
\bibliographystyle{unsrtnat}
\bibliography{RobustGGM,sml,robggm}
}
\newpage
\appendix
\section*{Appendix}

\section{Useful lemma(s)}

\begin{lemma}[Lemma 1 of \cite{RWRY11}]\label{Lem:SampleCov}
	Suppose that $\{\xi\}_{i=1}^n$ are iid samples from $N(0,\Sigma)$ with $n \geq 40 \max_i \Sigma_{ii}$. Let $\eventOne$ be the event that 
	\begin{align*}
		\bigg\| \frac{1}{n}\sum_{i=1}^n \xi(\xi)^\top - \Sigma \bigg\|_\infty \leq 8 (\max_i \Sigma_{ii}) \sqrt{\frac{10 \tau \log p}{n}}
	\end{align*}
	where $\tau$ is any constant greater than 2.
	Then, the probability of event $\eventOne$ occurring is at least $1- 4/p^{\tau -2}$. 
\end{lemma}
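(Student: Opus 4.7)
The plan is to establish the elementwise $\ell_\infty$ concentration bound by controlling each entry of $\hat\Sigma_n - \Sigma$ individually via a sub-exponential concentration inequality and then taking a union bound over all $p^2$ entries. Writing $\hat\Sigma_n := \frac{1}{n}\sum_{i=1}^n X^{(i)} (X^{(i)})^\top$, I would first fix a pair of indices $(j,k)$ and study the scalar random variable $Y_i := X^{(i)}_j X^{(i)}_k - \Sigma_{jk}$, which is mean-zero and i.i.d.\ across $i$. The key structural observation is the polarization identity $X_j X_k = \tfrac{1}{4}\big[(X_j+X_k)^2 - (X_j-X_k)^2\big]$, which expresses $Y_i$ as a linear combination of two centered chi-squared variables with one degree of freedom (after a suitable rescaling by the variances of $X_j \pm X_k$).

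Second, I would apply the Laurent--Massart tail bound (or equivalently a Bernstein-type inequality for sub-exponential variables) to each of the two chi-squared sums. Since the variances of $X_j + X_k$ and $X_j - X_k$ are both bounded by $2(\Sigma_{jj} + \Sigma_{kk}) \leq 4\max_i \Sigma_{ii}$, this yields a tail of the form
\begin{equation*}
\mathbb{P}\!\left( \big| \hat\Sigma_{n,jk} - \Sigma_{jk} \big| > t \right) \leq 4 \exp\!\left( - \frac{c\, n\, t^2}{(\max_i \Sigma_{ii})^2} \right)
\end{equation*}
valid in the sub-Gaussian regime $t \lesssim \max_i \Sigma_{ii}$; the hypothesis $n \geq 40 \max_i \Sigma_{ii}$ is what keeps the deviation $t = 8(\max_i \Sigma_{ii})\sqrt{10\tau \log p / n}$ inside this regime so that the Gaussian (rather than exponential) branch of the Bernstein bound is active.

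Third, I would take a union bound over all $p^2$ index pairs $(j,k)$. Plugging in $t = 8(\max_i \Sigma_{ii})\sqrt{10\tau \log p / n}$ produces an exponent proportional to $-10\tau \log p$ times a constant which, after careful tracking of the constants from polarization and Laurent--Massart, dominates the $\log p^2 = 2\log p$ contributed by the union bound, leaving a net failure probability of at most $4 p^{-(\tau-2)}$. The final statement then follows by complementation.

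The main obstacle is constant tracking: the specific numerical constants $8$, $10$, and the tail exponent $\tau-2$ are tightly linked, so I would need to be careful in the polarization step (which inflates variance by a factor of $4$) and in the Laurent--Massart inequality to verify that the chain of inequalities produces exactly these constants rather than slightly worse ones. Everything else -- mean-zero reduction, sub-exponentiality of products of Gaussians, and the union bound -- is routine once the Bernstein-type tail is in place.
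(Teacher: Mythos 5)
This lemma is imported verbatim from \cite{RWRY11} and the paper supplies no proof of its own, so there is nothing to diverge from: your polarization-plus-chi-squared-tail-plus-union-bound argument is precisely the standard proof used in that reference (via Bickel--Levina's concentration lemma for sample covariances of sub-Gaussian vectors), and it is correct modulo the constant tracking you already flag. One small caveat: the hypothesis $n \geq 40 \max_i \Sigma_{ii}$ is not what keeps the deviation $t = 8(\max_i\Sigma_{ii})\sqrt{10\tau\log p/n}$ in the sub-Gaussian branch of the Bernstein bound --- that is governed by the requirement $t \leq 40\max_i\Sigma_{ii}$, i.e.\ $n \gtrsim \tau\log p$ --- but this does not affect the substance of your argument.
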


%%%%%%%%%%%%%%%%%%%%%%%%%%%%%%%%%%%%%
%%%%%%%%%%%%%%%%%%%%%%%%%%%%%%%%%%%%%
%%%%%%%%%%%%%%%%%%%%%%%%%%%%%%%%%%%%%

\section{Proof of Theorem \ref{ThmMain}}
	Consider the optimal error $\Lerrt := \Lparam-\Tparam$ and $\Lerrw := \Lw - \Tw$ where $(\Lparam,\Lw)$ is an {\bf arbitrary} local optimum of $M$-estimator \eqref{EqnRobustGGM}. Our proof mainly uses the fact that $(\Lparam,\Lw)$ is a local minimum of \eqref{EqnRobustGGM} satisfying
\begin{align*}
	\BigdoubleInner{\frac{1}{h}\sum_{i=1}^n \Lw_i \xi(\xi)^\top - (\Tparam + \Lerrt)^{-1}}{\Lparam - \param} \leq - \bigdoubleInner{\partial \lambda \offNorm{\Tparam + \Lerrt} }{\Lparam - \param} \, \quad \text{for any } \param \in \Omega.
\end{align*}
This inequality comes from the first order stationary condition (see \cite{Loh13} for details) in terms of only $\param$ fixing $w$ at $\Lw$. 
Therefore, if we take $\param = \Tparam$ above, we have
\begin{align}\label{EqnLocalMinima}
	&  \BigdoubleInner{\frac{1}{h}\sum_{i=1}^n \Lw_i \xi(\xi)^\top - (\Tparam + \Lerrt)^{-1}}{\Lparam - \Tparam} \leq - \bigdoubleInner{\partial \lambda \offNorm{\Tparam + \Lerrt} }{\Lparam - \Tparam} \nonumber\\
	\overset{(i)}{\leq} \, &  \lambda( \offNorm{\Tparam} - \offNorm{\Lparam} ) \leq \lambda( \offNorm{\Tparam} + \offNorm{\Lerrt_{S^c}}  - \offNorm{\Lerrt_{S^c}}  - \offNorm{\Lparam} ) \nonumber\\
	= \, & \lambda( \offNorm{\Tparam + \Lerrt_{S^c}}  - \offNorm{\Lerrt_{S^c}} - \offNorm{\Lparam} )  \nonumber\\
	\overset{(ii)}{\leq} \, & \lambda\big( \offNorm{\Tparam + \Lerrt_{S^c} + \Lerrt_{S}} + \offNorm{\Lerrt_{S}} - \offNorm{\Lerrt_{S^c}}  - \offNorm{\Lparam} \big) \nonumber\\
	= \, & \lambda(\offNorm{\Lerrt_{S}} - \offNorm{\Lerrt_{S^c}} ) \, ,
\end{align}
where $S$ is true support set of $\Tparam$, the inequalities $(i)$ and $(ii)$ hold by respectively the convexity and the triangular inequality of $\offNorm{\cdot}$ norm. 
	
	We first begin with the following lemma saying that $\|\Lerrt\|_\F \leq 1$, under the conditions of \ref{Con:rsc} and \ref{Con:incoh}. 
	\begin{lemma}\label{Lem:BoundDelta}
		Suppose that the condition \ref{Con:rsc} and \ref{Con:incoh} hold. Moreover, $4\max\big\{\|\frac{1}{h}\sum_{i=1}^n \Tw_i \xi(\xi)^\top - (\Tparam )^{-1}\|_\infty, \, \IncConTwo \big\} \leq \lambda \leq \frac{\RSCcon - \IncCon}{3R}$ as mentioned in the statement of Theorem \ref{ThmMain}. Then, for any local optimum $(\Lparam,\Lw)$, $\|\Lerrt\|_\F \leq 1$, and the inequality \eqref{EqnRSC} holds for all possible local optimal errors $\Lerrt$. 
	\end{lemma}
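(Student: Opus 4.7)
The plan is to prove $\|\Lerrt\|_\F \leq 1$ by contradiction, after which \ref{Con:rsc} applies directly to $\Lerrt$ and the second assertion is immediate.

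First, I would start from the first-order stationary condition for $\Lparam$ in the $\param$-subproblem (which is convex with $\Lw$ held fixed), specialized to $\param = \Tparam$. This is exactly the author's inequality \eqref{EqnLocalMinima}, which after applying convexity of $\offNorm{\cdot}$ and the triangle inequality gives
\begin{align*}
\bigdoubleInner{\frac{1}{h}\sum_{i=1}^n \Lw_i \xi(\xi)^\top - \Lparam^{-1}}{\Lerrt} \leq \lambda \offNorm{\Lerrt}.
\end{align*}
I then add and subtract $\Tparam^{-1}$ and $\frac{1}{h}\sum_i \Tw_i \xi(\xi)^\top$ so as to isolate the symmetric Bregman-type quantity on the left from the ``noise'' on the right:
\begin{align*}
\bigdoubleInner{\Tparam^{-1} - \Lparam^{-1}}{\Lerrt} \leq \lambda \offNorm{\Lerrt} + \Big|\bigdoubleInner{\tfrac{1}{h}\sum_i \Tw_i \xi(\xi)^\top - \Tparam^{-1}}{\Lerrt}\Big| + \Big|\bigdoubleInner{\tfrac{1}{h}\sum_i \Lerrw_i \xi(\xi)^\top}{\Lerrt}\Big|.
\end{align*}
The first noise inner product is bounded by H\"older together with the hypothesis $\lambda \geq 4\|\tfrac{1}{h}\sum_i \Tw_i \xi(\xi)^\top - \Tparam^{-1}\|_\infty$, contributing $\tfrac{\lambda}{4}\|\Lerrt\|_1$; the second is handled by \ref{Con:incoh} combined with $\lambda \geq 4\IncConTwo$, contributing $\IncCon \|\Lerrt\|_\F + \tfrac{\lambda}{4}\|\Lerrt\|_1$. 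Using $\offNorm{\Lerrt} \leq \|\Lerrt\|_1$ and the feasibility bound $\|\Lerrt\|_1 \leq \|\Lparam\|_1 + \|\Tparam\|_1 \leq 2R$ collapses everything into the deterministic upper estimate
\begin{align*}
\bigdoubleInner{\Tparam^{-1} - \Lparam^{-1}}{\Lerrt} \leq 3\lambda R + \IncCon \|\Lerrt\|_\F.
\end{align*}

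The main obstacle is to match this with a lower bound that does \emph{not} presuppose $\|\Lerrt\|_\F \leq 1$, since \ref{Con:rsc} is only available on the unit Frobenius ball. My plan is to exploit monotonicity of the path-quantity $\phi(t) := \bigdoubleInner{\Tparam^{-1} - (\Tparam + t\Lerrt)^{-1}}{\Lerrt}$ on $[0,1]$: direct differentiation yields $\phi'(t) = \|(\Tparam+t\Lerrt)^{-1/2}\Lerrt(\Tparam+t\Lerrt)^{-1/2}\|_\F^2 \geq 0$, so $\phi$ is non-decreasing. Suppose for contradiction that $\|\Lerrt\|_\F > 1$, and set $t_0 = 1/\|\Lerrt\|_\F \in (0,1)$, so that the rescaled error $t_0 \Lerrt$ has Frobenius norm exactly one; applying \ref{Con:rsc} to $t_0\Lerrt$ gives $t_0 \phi(t_0) \geq \RSCcon$, and monotonicity promotes this to $\phi(1) \geq \phi(t_0) \geq \RSCcon \|\Lerrt\|_\F$.

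Chaining with the deterministic upper bound on $\phi(1)$ then yields $(\RSCcon - \IncCon)\|\Lerrt\|_\F \leq 3\lambda R$, and the assumed ceiling $\lambda \leq (\RSCcon - \IncCon)/(3R)$ forces $\|\Lerrt\|_\F \leq 1$, contradicting the supposition. Hence $\|\Lerrt\|_\F \leq 1$ at every local optimum, whereupon \ref{Con:rsc} is directly applicable to $\Lerrt$, proving the second claim.
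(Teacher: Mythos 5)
Your proposal is correct and follows essentially the same route as the paper's proof: the stationarity inequality \eqref{EqnLocalMinima} combined with H\"older, \ref{Con:incoh}, and the feasibility bound $\|\Lerrt\|_1 \leq 2R$ gives the upper estimate $3\lambda R + \IncCon\|\Lerrt\|_\F$, while the rescaling $t_0 = 1/\|\Lerrt\|_\F$ together with monotonicity along the segment (your direct computation of $\phi'(t)\geq 0$ is just the convexity of $-\log\det$ used in the paper) yields the lower bound $\RSCcon\|\Lerrt\|_\F$, and the window on $\lambda$ closes the contradiction. The only differences are cosmetic (differentiating $\phi$ instead of citing convexity, and assuming $\|\Lerrt\|_\F>1$ strictly, which is in fact slightly cleaner).
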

	\begin{proof}
		The Lemma \ref{Lem:BoundDelta} can be proved by the fact $-\log\det{\param}$ is a convex function. Hence, the function $f: [0,1] \rightarrow \reals$ given by $f(t;\Tparam,\Lerrt) := -\log\det\big(\Tparam + t\Lerrt\big)$ is also convex in $t$, and $\bigdoubleInner{-(\Tparam + \Lerrt)^{-1}}{\Lerrt} \geq \bigdoubleInner{-(\Tparam + t\Lerrt)^{-1}}{\Lerrt}$ for $t \in [0,1]$ (see \cite{Loh13} for details). 
		
		Now, suppose that $\|\Lerrt\|_\F \geq 1$. Then, we have
		\begin{align}\label{EqnLem1Tmp1}
			& \BigdoubleInner{ \big(\Tparam\big)^{-1} - \big(\Tparam + \Lerrt\big)^{-1} }{\Lerrt} \geq \BigdoubleInner{ \big(\Tparam\big)^{-1} - \big(\Tparam + t\Lerrt\big)^{-1} }{\Lerrt} \nonumber\\
			= \ & \frac{1}{t} \BigdoubleInner{ \big(\Tparam\big)^{-1} - \big(\Tparam + t\Lerrt\big)^{-1} }{t \Lerrt} \, .
		\end{align}
		Since $\|\Lerrt\|_\F \geq 1$, we can set $t = \frac{1}{\|\Lerrt\|_\F}$ so that $\| t \Lerrt\|_\F = 1$. Hence, for $t \Lerrt$, the condition \ref{Con:rsc} holds by assumption of the statement:
		\begin{align*}
			\BigdoubleInner{ \big(\Tparam\big)^{-1} - \big(\Tparam + t\Lerrt\big)^{-1} }{t \Lerrt} \geq \RSCcon \|t\Lerrt\|_\F^2 = \RSCcon \, .
		\end{align*}
		Combining with \eqref{EqnLem1Tmp1} yields
		\begin{align}\label{EqnLem1Tmp2}
			\BigdoubleInner{ \big(\Tparam\big)^{-1} - \big(\Tparam + \Lerrt\big)^{-1} }{\Lerrt} \geq \RSCcon \|\Lerrt\|_\F \, .
		\end{align}
		 
		Now, from \eqref{EqnLocalMinima} and \eqref{EqnLem1Tmp2} followed by the condition \ref{Con:incoh} and H\"older's inequity, we can obtain
		\begin{align*}
			& \RSCcon \|\Lerrt\|_\F  \leq \BigdoubleInner{ (\Tparam )^{-1} - \frac{1}{h}\sum_{i=1}^n \Lw_i \xi(\xi)^\top}{\Lerrt} + \lambda(\offNorm{\Lerrt_{S}} - \offNorm{\Lerrt_{S^c}} ) \nonumber\\
			\leq \ &   \BigdoubleInner{ (\Tparam )^{-1} - \frac{1}{h}\sum_{i=1}^n \Lw_i \xi(\xi)^\top}{\Lerrt} + \lambda\offNorm{\Lerrt} \nonumber\\
			\leq \ &  \BigdoubleInner{ (\Tparam )^{-1} - \frac{1}{h}\sum_{i=1}^n \Tw_i \xi(\xi)^\top  + \frac{1}{h}\sum_{i=1}^n \Tw_i \xi(\xi)^\top - \frac{1}{h}\sum_{i=1}^n \Lw_i \xi(\xi)^\top}{\Lerrt} + \lambda\offNorm{\Lerrt} \nonumber\\
			\leq \ & \Big\|\frac{1}{h}\sum_{i=1}^n \Tw_i \xi(\xi)^\top - (\Tparam )^{-1} \Big\|_{\infty} \, \cdot \|\Lerrt\|_1 + \IncCon \|\Lerrt\|_\F + \IncConTwo \|\Lerrt\|_1 +  \lambda\offNorm{\Lerrt} \, .
		\end{align*}
		By the choice of $\lambda$ in the assumption of the statement and by the fact that $\offNorm{\Lerrt} \leq \|\Lerrt\|_1$ and $\|\Lerrt\|_1 \leq \|\Lparam\|_1 + \|\Tparam\|_1 \leq 2R$, we can rearrange the above inequality into
		\begin{align*}
			\|\Lerrt\|_\F \leq \frac{3 \lambda}{2\big(\RSCcon-\IncCon\big)}  \|\Lerrt\|_1 \leq \frac{3 \lambda R}{\big(\RSCcon-\IncCon\big)} \leq 1 \, ,
		\end{align*}
		which conflicts with the assumption in the beginning of this proof. Hence, by contradiction, we can conclude $\|\Lerrt\|_\F \leq 1$ under conditions in the statement.
	\end{proof}

Now, by the RSC condition in \ref{Con:rsc} for $\Lerrt$, we obtain
\begin{align*}
	& \RSCcon \|\Lerrt\|_\F^2 \leq \BigdoubleInner{ \big(\Tparam\big)^{-1} - \big(\Tparam + \Lerrt\big)^{-1} }{\Lerrt} \nonumber\\
	= \, & \BigdoubleInner{ \underbrace{\frac{1}{h}\sum_{i=1}^n \Tw_i \xi(\xi)^\top}_{\text{(I)}} - \underbrace{\frac{1}{h}\sum_{i=1}^n \Tw_i \xi(\xi)^\top}_{\text{(II)}}  
	+ \underbrace{\frac{1}{h}\sum_{i=1}^n \Lw_i \xi(\xi)^\top}_{\text{(III)}} - \underbrace{\frac{1}{h}\sum_{i=1}^n \Lw_i \xi(\xi)^\top}_{\text{(I)}} \nonumber\\
	\quad & + \underbrace{\big(\Tparam\big)^{-1}}_{\text{(II)}} - \underbrace{\big(\Tparam + \Lerrt\big)^{-1}}_{\text{(III)}} }{\Lerrt} \, .
\end{align*}
By the condition \ref{Con:incoh}
\begin{align*}
	\text{(I): } & \BigdoubleInner{ \frac{1}{h}\sum_{i=1}^n \Tw_i \xi(\xi)^\top - \frac{1}{h}\sum_{i=1}^n \Lw_i \xi(\xi)^\top}{\Lerrt} = - \BigdoubleInner{\frac{1}{h}\sum_{i=1}^n \Lerrw_i \xi(\xi)^\top}{\Lerrt} \nonumber\\
	& \leq \IncCon \|\Lerrt\|_\F + \IncConTwo \|\Lerrt\|_1 \leq \IncCon \|\Lerrt\|_\F + \frac{\lambda}{4} \|\Lerrt\|_1 \, ,
\end{align*}
and at the same time, by \eqref{EqnLocalMinima},
\begin{align*}
	\text{(III): } \BigdoubleInner{\frac{1}{h}\sum_{i=1}^n \Lw_i \xi(\xi)^\top - \big(\Tparam + \Lerrt\big)^{-1}}{\Lerrt} \leq \lambda(\offNorm{\Lerrt_{S}} - \offNorm{\Lerrt_{S^c}} ) \, . 
\end{align*}
Finally, by the assumption on $\lambda$ and H\"older's inequity,
\begin{align*}
	&\text{(II): } \BigdoubleInner{ -\frac{1}{h}\sum_{i=1}^n \Tw_i \xi(\xi)^\top + \big(\Tparam\big)^{-1}}{\Lerrt} \nonumber\\ 
	& \leq  \Big\|\frac{1}{h}\sum_{i=1}^n \Tw_i \xi(\xi)^\top - (\Tparam )^{-1}\Big\|_{\infty} \, \cdot \|\Lerrt\|_1 \leq \frac{\lambda}{4} \|\Lerrt\|_1 \, .
\end{align*}
Combining all pieces together yields 
\begin{align}\label{EqnMainThmTmp1}
	& 0 \leq \RSCcon \|\Lerrt\|_\F^2 \leq \frac{3\lambda}{2}\|\Lerrt_{S}\|_1 + \frac{\lambda}{2}\|\Lerrt_{S^c}\|_1 - \lambda\offNorm{\Lerrt_{S^c}}  + \IncCon \|\Lerrt\|_\F  \nonumber\\
	\leq \ & \frac{3\lambda}{2}\|\Lerrt_{S}\|_1 + \frac{\lambda}{2}\diagNorm{\Lerrt_{S^c}} - \frac{\lambda}{2}\offNorm{\Lerrt_{S^c}}  + \IncCon \|\Lerrt\|_\F 
\end{align}
where $\diagNorm{\cdot}$ is defined as the absolute sum of diagonal elements of a matrix, and $\diagNorm{\Lerrt_{S^c}} \leq \sqrt{p}\|\Lerrt_{S^c}\|_\F \leq \sqrt{p}\|\Lerrt\|_\F$. Therefore, \eqref{EqnMainThmTmp1} implies 
\begin{align*}
	\|\Lerrt\|_\F \leq \frac{1}{\RSCcon}\Big(2\lambda \sqrt{k+p} + \IncCon\Big) \, .
\end{align*}
At the same time in order to derive $\offNorm{\cdot}$ error bound, we again use the inequality in \eqref{EqnMainThmTmp1}:
\begin{align*}
	\offNorm{\Lerrt_{S^c}} \leq 3\offNorm{\Lerrt_{S}} + \diagNorm{\Lerrt_{S^c}} + \frac{2}{\lambda}\IncCon \|\Lerrt\|_\F \,.
\end{align*}
Hence, 
\begin{align*}
	& \offNorm{\Lerrt} \nonumber\\
	\leq \, & \offNorm{\Lerrt_{S}} + \offNorm{\Lerrt_{S^c}} \leq 4 \offNorm{\Lerrt_{S}} + \diagNorm{\Lerrt_{S^c}} + \frac{2}{\lambda} \IncCon \|\Lerrt\|_\F \nonumber\\
	\leq \, & 5\sqrt{k+p} \|\Lerrt\|_\F  + \frac{2}{\lambda} \IncCon \|\Lerrt\|_\F = \Big( 5\sqrt{k+p} + \frac{2}{\lambda} \IncCon\Big) \|\Lerrt\|_\F \nonumber\\
	\leq \, & \frac{2}{\lambda\, \RSCcon}\Big( 3\lambda \sqrt{k+p} + \IncCon\Big)^2 \, ,
\end{align*}
which completes the proof.

%%%%%%%%%%%%%%%%%%%%%%%%%%%%%%%%%%%%%
%%%%%%%%%%%%%%%%%%%%%%%%%%%%%%%%%%%%%
%%%%%%%%%%%%%%%%%%%%%%%%%%%%%%%%%%%%%

\section{Proof of Corollary \ref{Cor1}}
	In order to utilize Theorem \ref{ThmMain}, we simply need to specify the quantity $\IncCon$ and $\IncConTwo$ in \ref{Con:incoh}. Toward this, we follow similar strategy as in \cite{Nguyen2011b}: given $\errt$, we divide the index of $\errt$ into the disjoint exhaustive subsets $S_1, S_2, \hdots, S_q$ of size $|B|$ such that $S_1$ contains $|B|$ largest absolute elements in $\errt$, and so on. Then, we have
\begin{align*}
	&\Big| \BigdoubleInner{\sum_{i=1}^n \Lerrw_i \xi(\xi)^\top}{\Lerrt} \Big| = \Big| \BigdoubleInner{\sum_{i \in B} \Lerrw_i \xi(\xi)^\top}{\Lerrt} \Big| \nonumber\\
	= \, & \bigg| \sum_{j=1}^q\BigdoubleInner{\sum_{i \in B} \Lerrw_i \big[\xi(\xi)^\top\big]_{S_j}}{\big[\Lerrt\big]_{S_j}} \bigg| \leq \sum_{j=1}^q \Big| \BigdoubleInner{\sum_{i \in B} \Lerrw_i \big[\xi(\xi)^\top\big]_{S_j}}{\big[\Lerrt\big]_{S_j}} \Big| \, . 
\end{align*}
Let $D_{\Lerrw}$ be a $|B|\times |B|$ diagonal matrix whose $i$-th diagonal entry is $[D_{\Lerrw}]_{ii} := \Lerrw_i$. Let also $X^B$ is a $|B|\times p$ design matrix for samples in the set $B$. Finally $X^B_{S_j}$ denotes a $|B|\times |S_j|$ sub-matrix of $X^B$ whose columns are indexed by $S_j$. Then,
\begin{align*}
	&\sum_{j=1}^q \Big| \BigdoubleInner{\sum_{i \in B} \Lerrw_i \big[\xi(\xi)^\top\big]_{S_j}}{\big[\Lerrt\big]_{S_j}} \Big| = \sum_{j=1}^q \Big| \text{Trace}\Big( \big[\Lerrt\big]_{S_j}^\top [X^B_{S_j}]^\top D_{\Lerrw}X^B_{S_j} \Big) \Big| \nonumber\\
	= \, & \sum_{j=1}^q \Big| \BigdoubleInner{X^B_{S_j} \big[\Lerrt\big]_{S_j}}{D_{\Lerrw}X^B_{S_j}} \Big| \leq \sum_{j=1}^q \big\|X^B_{S_j} \big[\Lerrt\big]_{S_j} \big\|_\F \, \big\|D_{\Lerrw}X^B_{S_j}\big\|_\F \nonumber\\
	\leq \, & \sqrt{|B|}\Big(\max_j \matNorm{X^B_{S_j}}_2\Big)^2 \, \underbrace{\Big({\textstyle \sum_{j} }\|[\Lerrt]_{S_j} \|_\F\Big)}_{\text{(I)}} \, .
\end{align*}

\emph{(I):} by the standard bound in \cite{CanRomTao06}, we obtain 
\begin{align*}
	\sum_{j} \|[\Lerrt]_{S_j} \|_\F = \|[\Lerrt]_{S_1} \|_\F + \sum_{j=2}^q \|[\Lerrt]_{S_j} \|_1  \leq \|[\Lerrt]_{S_1} \|_\F + \frac{1}{\sqrt{|B|}} \sum_{j=2}^q \|[\Lerrt]_{S_{j}} \|_1 \leq \|\Lerrt \|_\F + \frac{1}{\sqrt{|B|}} \|\Lerrt \|_1 \, .
\end{align*}

Combining all pieces together yields 
\begin{align*}
	&\Big| \BigdoubleInner{\frac{1}{h}\sum_{i=1}^n \Lerrw_i \xi(\xi)^\top}{\Lerrt} \Big| \leq  \frac{\sqrt{|B|}}{h} \Big(\max_j \matNorm{X^B_{S_j}}_2\Big)^2 \, \bigg(\|\Lerrt \|_\F + \frac{1}{\sqrt{|B|}} \|\Lerrt \|_1 \bigg) \, ,
\end{align*}
hence, we can guarantee the condition \ref{Con:incoh} with functions
\begin{align*}
	&\IncCon = f(X^B) \sqrt{\frac{|B|\log p}{h}} \quad \text{and} \\ %4 \sqrt{\matNorm{\SigB}_2} \max_{i\in B} | \delta_i| \sqrt{\log p}\frac{|B|}{h} \, , \\
	&\IncConTwo = f(X^B) \sqrt{\frac{\log p}{h}}  \, . %4 \sqrt{\matNorm{\SigB}_2} \max_{i\in B} | \delta_i| \sqrt{\log p} \frac{\sqrt{|B|}}{h} \, .
\end{align*}

To complete the proof, we need to specify the quantity $\big\|\frac{1}{h}\sum_{i=1}^n \Tw_i \xi(\xi)^\top - (\Tparam )^{-1}\big\|_{\infty} = \big\|\frac{1}{h}\sum_{i \in G} \Tw_i \xi(\xi)^\top - (\Tparam )^{-1}\big\|_{\infty}$ for the appropriate choice of $\lambda$ as stated in Theorem \ref{ThmMain}. Recall that we constructed $\Tw$ as follows: $\Tw_i$ is simply set to $\Lw_i$ if $i \in G$, and $\Tw_i =0$ for $i \in B$. Let $\widebar{G}$ be the subset of $G$ such that $\Tw_i = 1$ and $\widebar{G}^c$ be the subset such that $\Tw_i = 0$. Then, we have $h \geq |\widebar{G}| \geq h - |B|$, and hence $h - |\widebar{G}| \leq |B|$. Now, by Lemma \ref{Lem:SampleCov}, we can obtain the following bound:
\begin{align*}
	& \Big\|\frac{1}{h}\sum_{i=1}^n \Tw_i \xi(\xi)^\top - (\Tparam )^{-1}\Big\|_{\infty} = \bigg\|\frac{|\bar{G}|}{h}\frac{1}{|\widebar{G}|}\sum_{i \in \widebar{G}}  \xi(\xi)^\top - (\Tparam )^{-1}\bigg\|_{\infty} \nonumber\\
	= \ & \bigg\|\frac{|\widebar{G}|}{h} \Big( \frac{1}{|\widebar{G}|}\sum_{i \in \bar{G}}  \xi(\xi)^\top - (\Tparam )^{-1} \Big) - \Big( \frac{h - |\widebar{G}|}{h}\Big) (\Tparam )^{-1}\bigg\|_{\infty} \nonumber\\
	\leq \ & \bigg\|\frac{|\widebar{G}|}{h} \Big( \frac{1}{|\widebar{G}|}\sum_{i \in \bar{G}}  \xi(\xi)^\top - (\Tparam )^{-1} \Big)\bigg\|_\infty + \bigg\|\Big( \frac{h - |\widebar{G}|}{h}\Big) (\Tparam )^{-1}\bigg\|_{\infty} \nonumber\\
	\leq \ & \bigg\| \frac{1}{|\widebar{G}|}\sum_{i \in \bar{G}}  \xi(\xi)^\top - (\Tparam )^{-1} \bigg\|_\infty + \frac{|B|}{h}\| \Sigma^*\|_{\infty} \nonumber\\
	\leq \ & 8 (\max_i \Sigma^*_{ii}) \sqrt{\frac{10 \tau \log p }{\widebar{G}}} + \frac{|B|}{h}\| \Sigma^*\|_{\infty} \leq 8 (\max_i \Sigma^*_{ii}) \sqrt{\frac{10 \tau \log p }{h - |B|}} + \frac{|B|}{h}\| \Sigma^*\|_{\infty} 
\end{align*}
with probability at least $1- 4/p^{\tau -2}$ for any $\tau >2$.

%%%%%%%%%%%%%%%%%%%%%%%%%%%%%%%%%%%%%
%%%%%%%%%%%%%%%%%%%%%%%%%%%%%%%%%%%%%
%%%%%%%%%%%%%%%%%%%%%%%%%%%%%%%%%%%%%

\section{Proof of Corollary \ref{Cor2}}
Under \ref{Con:2}, $h-|B| \geq (n - a\sqrt{n}) - a\sqrt{n}$. Hence, if $n \geq 16 a^2$, then $h-|B| \geq (n - a\sqrt{n}) - a\sqrt{n} \geq \frac{n}{2}$. Moreover, $\frac{|B|}{h} \leq \frac{a\sqrt{n}}{n/2} \leq \frac{2a}{\sqrt{n}}$. Therefore, from the Corollary \ref{Cor1}, the selection of $\lambda$ in the statement satisfies $\lambda \geq 4\max\big\{ \|\frac{1}{h}\sum_{i=1}^n \Tw_i \xi(\xi)^\top - (\Tparam )^{-1}\|_{\infty}\, , \, \IncConTwo \big\}$.

Furthermore, as long as $n \geq \big(\matNorm{\Tparam}_2 + 1\big)^{4} \big(3Rc + f(X^B)\sqrt{2|B|} \big)^2(\log p)$,
\begin{align*}
	\lambda = c\sqrt{\frac{\log p}{n}} \leq \frac{\big(\matNorm{\Tparam}_2 + 1\big)^{-2} - f(X^B)\sqrt{\frac{2|B| \log p}{n}} }{3R} \, ,
\end{align*}
and therefore we have $\lambda \leq \frac{\RSCcon - f(X^B) \sqrt{\frac{|B|\log p}{h}}}{3R}$ where $c$ is defined as $4\max \big\{16 (\max_i \Sigma^*_{ii}) \sqrt{5 \tau} + \frac{2a\| \Sigma^*\|_{\infty}}{\sqrt{\log p}} \, , \, \sqrt{2} f(X^B) \big\}$, as stated.

%%%%%%%%%%%%%%%%%%%%%%%%%%%%%%%%%%%%%
%%%%%%%%%%%%%%%%%%%%%%%%%%%%%%%%%%%%%
%%%%%%%%%%%%%%%%%%%%%%%%%%%%%%%%%%%%%

\section{Proof of Corollary \ref{Cor3}}
In this proof, we simply need to specify the quantity $f(X^B)$ under the condition \ref{Con:2} and \ref{Con:3}, and then we can appeal to the result in Corollary \ref{Cor2}.  

Provided $|B| \geq \exp(1)$, ${p \choose |B|} \leq \big(\frac{\exp(1) p}{|B|}\big)^{|B|} \leq p^{|B|}$. As discussed in \cite{Vershynin11,Nguyen2011b}, if \ref{Con:3} holds, for every $t > 0$, we have
\begin{align*}
		\frac{1}{\sqrt{|B|}}\max_j \matNorm{X^B_{S_j}}_2 \leq \sqrt{\matNorm{\SigB}_2} \left(2 + t \right)  
\end{align*}
with probability at least $1 - 2{p \choose |B|} \exp (-t^2 |B|/2) \geq 1- 2\exp (-t^2|B|/2 + |B|\log p \big)$. Setting $t = 2\sqrt{\log p}$, we obtain
\begin{align*}
		\max_j \matNorm{X^B_{S_j}}_2 \leq 2(1+\sqrt{\log p}) \sqrt{\matNorm{\SigB}_2} \sqrt{|B|}
%		\frac{1}{\sqrt{|B|}}\max_j \matNorm{X^B_{S_j}} \leq \sqrt{\matNorm{\SigB}_2} \left(1 + \sqrt{\frac{k}{|B|}} + \sqrt{\frac{4 k \log p}{|B|}} \right)  
\end{align*}
with probability $1 - p^{-|B|}$. 
%In the proof of Corollary \ref{Cor:XCor}, $\max_j \matNorm{X^B_{S_j}}_2 \leq f(X^B) \sqrt{|B|\log p}$ by assumption \ref{ConLS:XCor}, and the remaining proof would be exactly the same. 
Therefore, under \ref{Con:2}, 
\begin{align*}
	& \Big(\max_j \matNorm{X^B_{S_j}}_2\Big)^2 \leq 4\big(1+\sqrt{\log p}\big)^2 \matNorm{\SigB}_2 |B| \leq 4 a \big(1+\sqrt{\log p}\big)^2 \matNorm{\SigB}_2  \sqrt{n} \nonumber\\
	= \ & \frac{4 a \big(1+\sqrt{\log p}\big)^2 \matNorm{\SigB}_2  \sqrt{n}}{\sqrt{h\log p}} \sqrt{h\log p} \leq \frac{4\sqrt{2} a \big(1+\sqrt{\log p}\big)^2 \matNorm{\SigB}_2}{\sqrt{\log p}} \sqrt{h\log p} \, ,
\end{align*} 
as specified in the statement.

\section{Additional Results}

\begin{figure}[!ht]
		%\vspace{-1cm}
		\centering
		\subfigure[M1]{\includegraphics[width=0.48\textwidth]{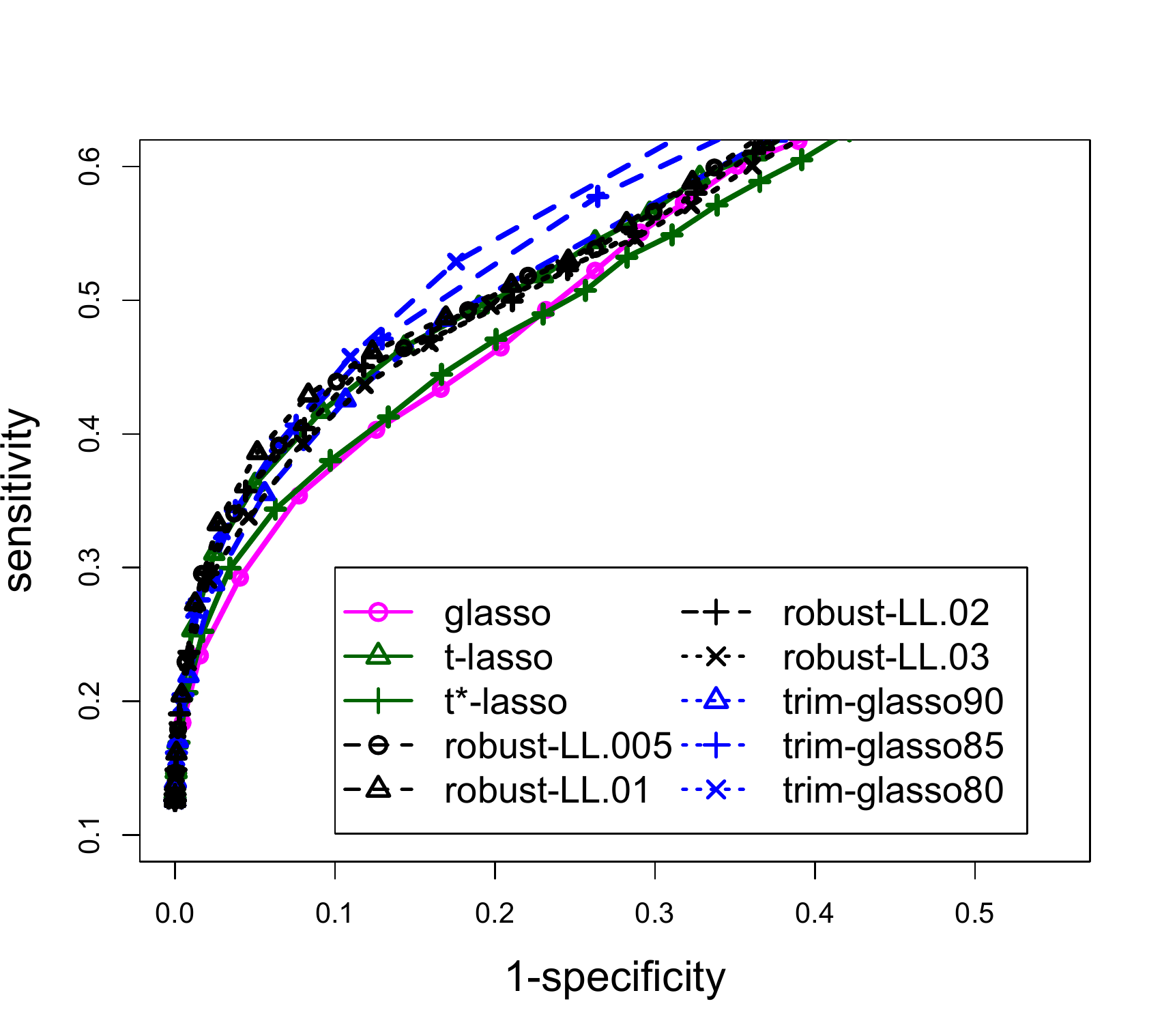}}
		\subfigure[M2]{\includegraphics[width=0.48\textwidth]{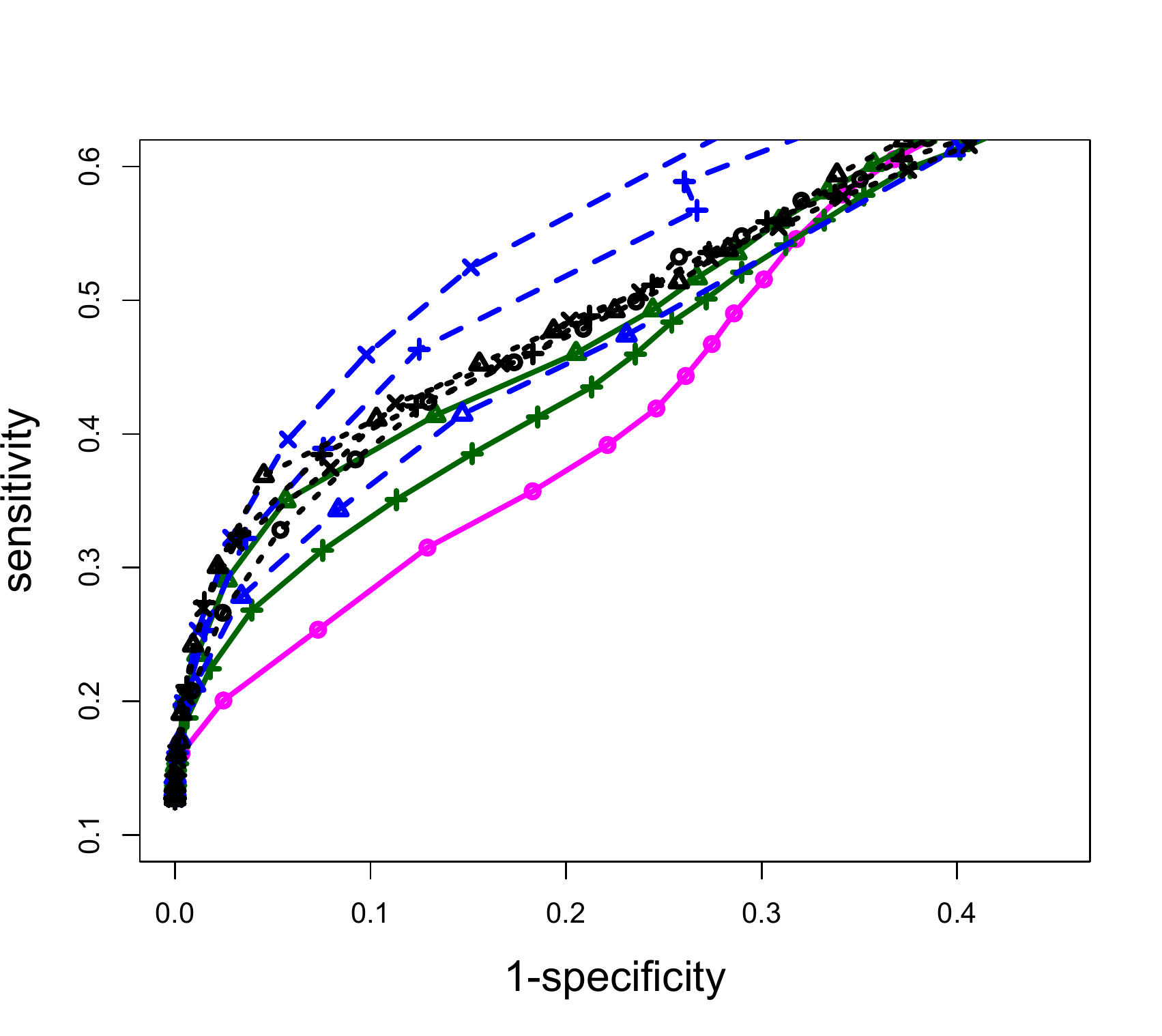}}\\
	 	\vspace{-.4cm}\subfigure[M3]{\includegraphics[width=0.48\textwidth]{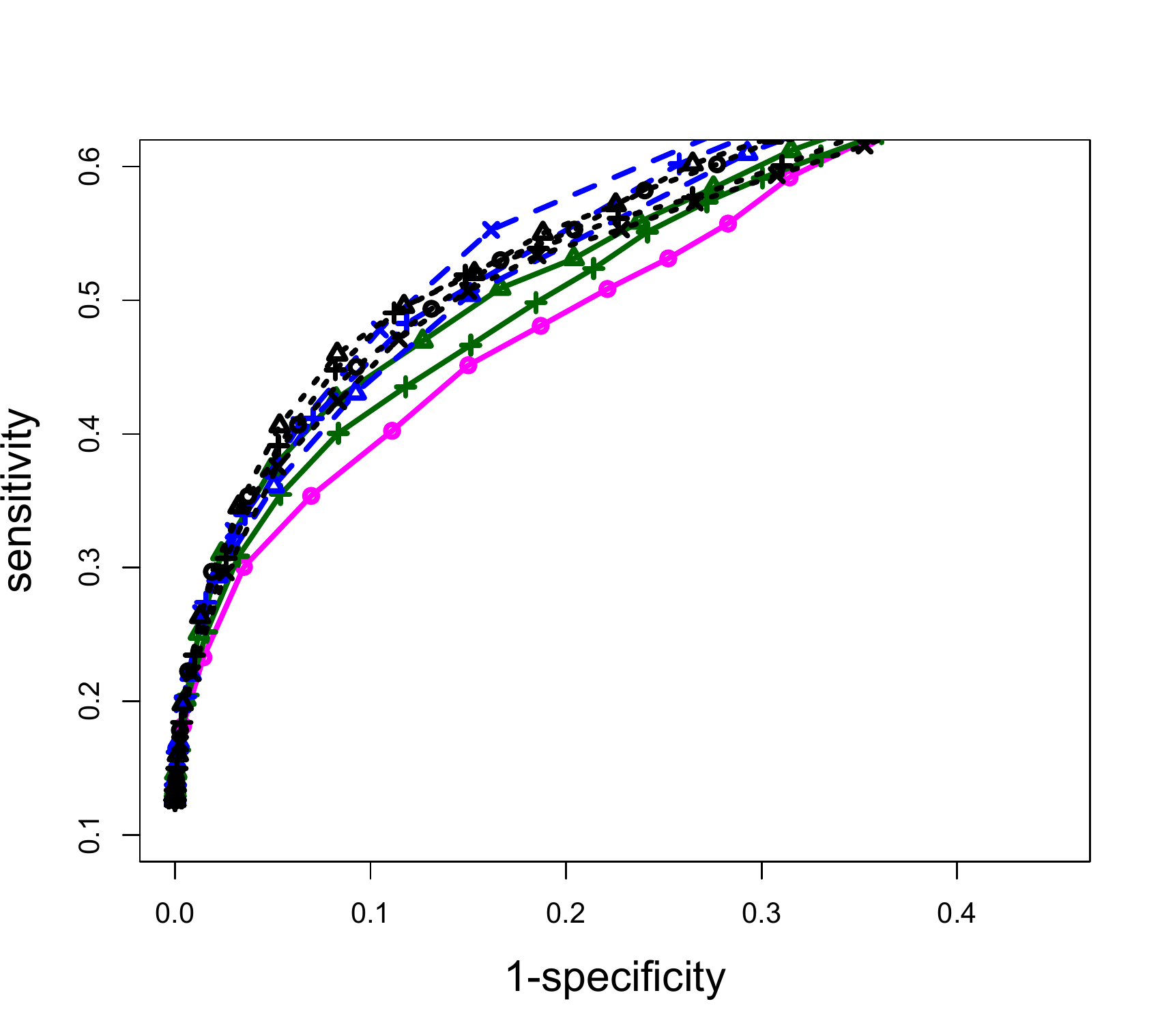}}
		\subfigure[M4]{\includegraphics[width=0.48\textwidth]{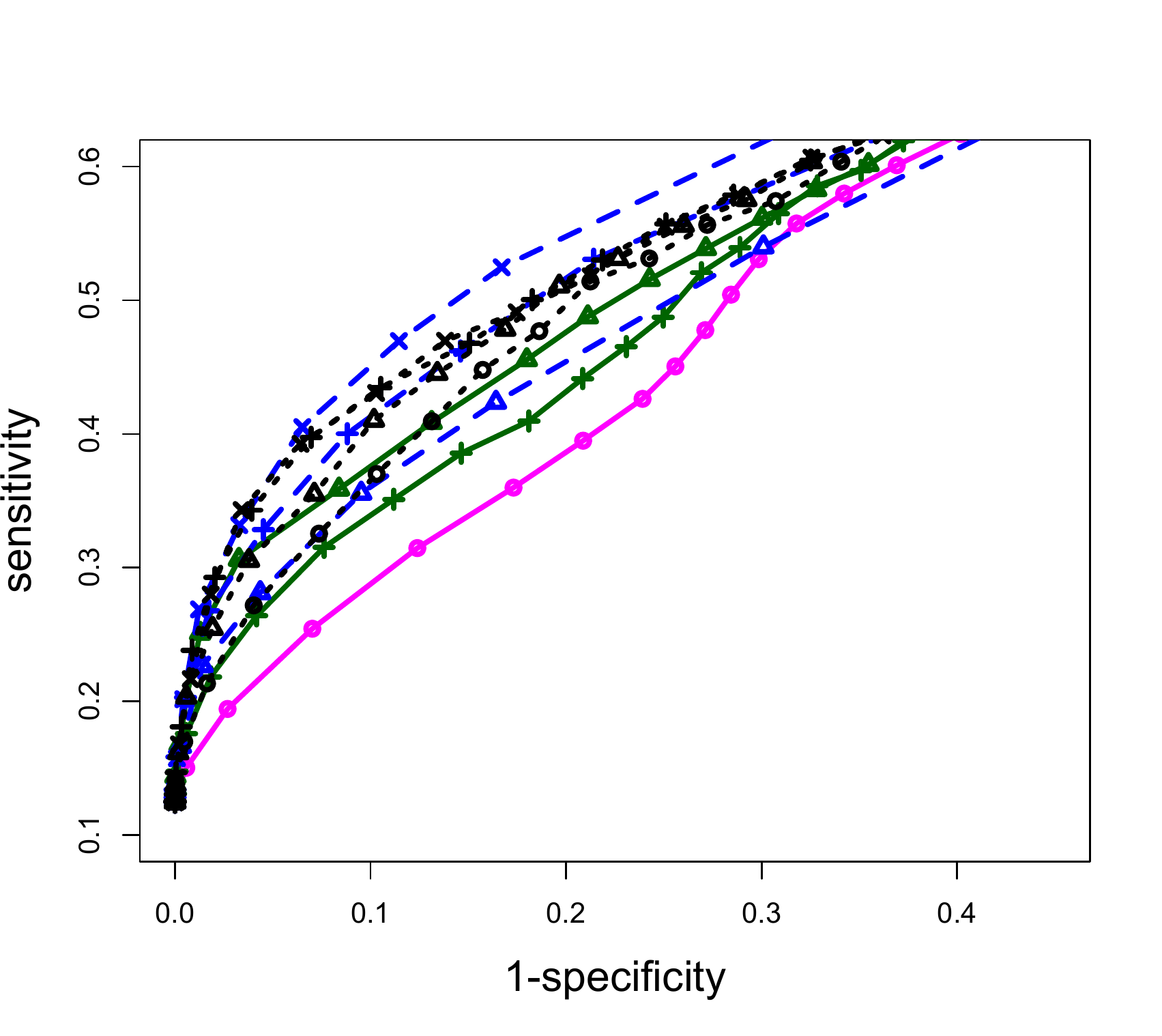}}
		%\vspace{-0.2in}
		%\vspace{-1.5cm}
		\caption{Average ROC curves for the comparison methods for contamination scenarios M1-M4.}
		\label{fig:roc1full}
		\vspace{-.4cm}
	\end{figure}

%\section{Additional Experimental Details and Results}
\begin{figure}[!ht]
		%\vspace{-1cm}
		\centering
		\includegraphics[width=0.5\textwidth]{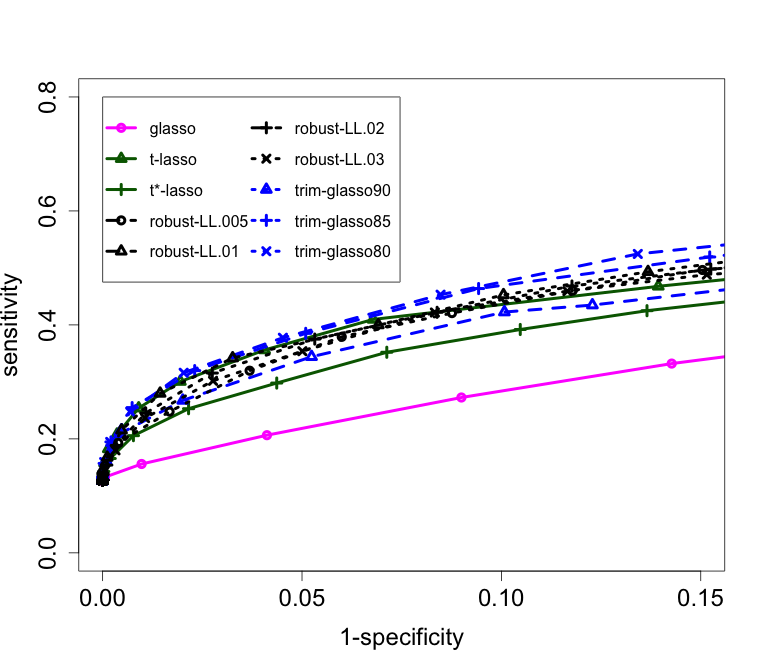}
		%\vspace{-0.2in}
		%\vspace{-1.5cm}
		\caption{Average ROC curves for the comparison methods for contamination scenario M5.}
		\label{fig:roc2}
	\end{figure}
%From the ROC curves we can see that our proposed approach is competitive compared the alternative robust approaches \emph{t-lasso}, \emph{t*-lasso} and \emph{robust-LL.} The edge over glasso is even more pronounced for scenarios M2, M4 and M5. Surprisingly, \emph{trim-glasso} with $h/n=80\%$ achieves superior sensitivity for nearly any specificity.

\begin{figure}[!ht]
		%\vspace{-1cm}
		\centering		
		\includegraphics[width=\textwidth]{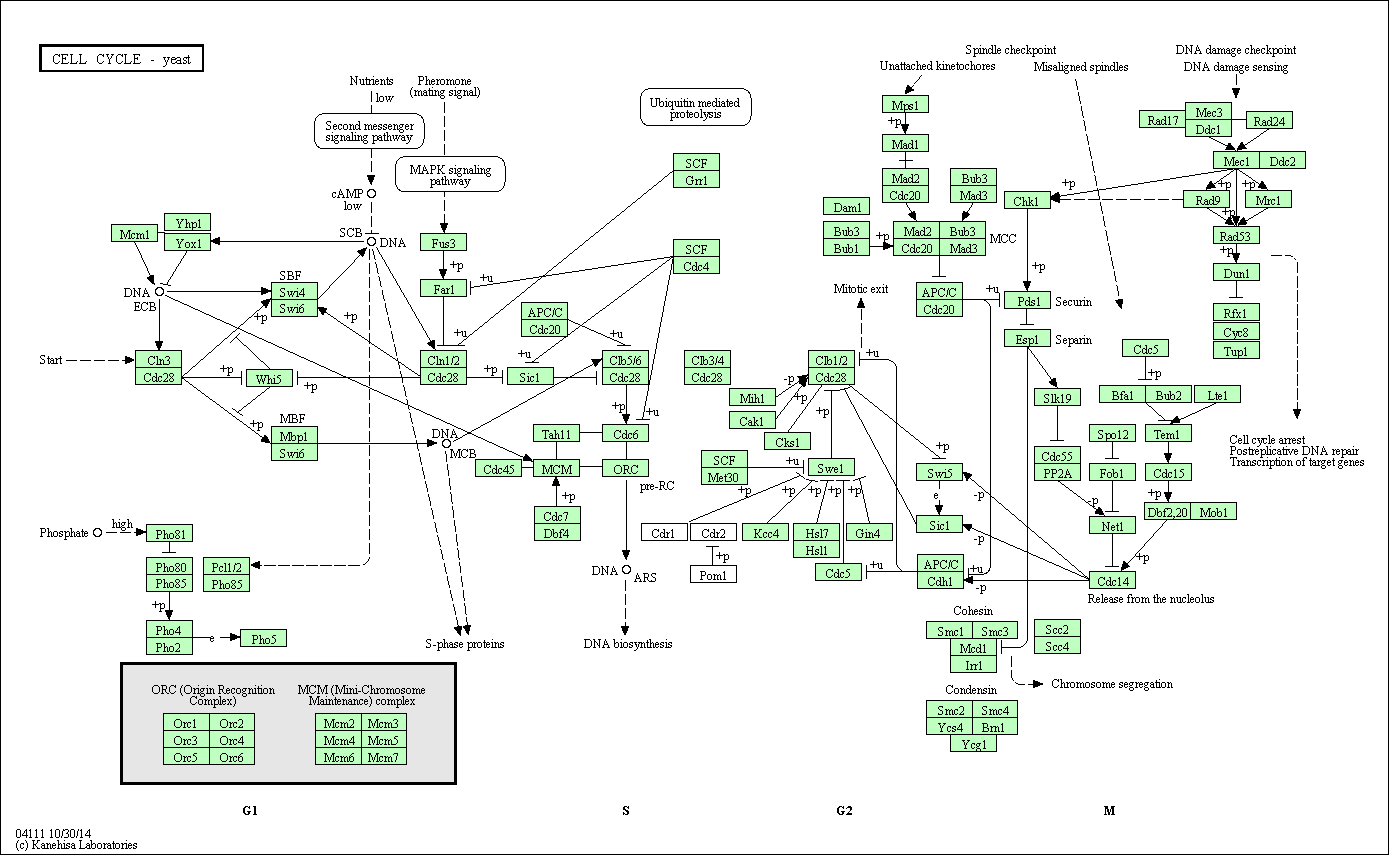}
		\caption{Reference Yeast Cell Signaling Network from the KEGG database~\cite{kegg}.}
		\label{fig:cell}
	\end{figure}

%\section{Additional Experimental Details and Results}
%\begin{figure}[!ht]
%		%\vspace{-1cm}
%		\centering
%		\includegraphics[width=0.5\textwidth]{fig/exp5c}
%		%\vspace{-0.2in}
%		%\vspace{-1.5cm}
%		\caption{Average ROC curves for the comparison methods for contamination scenario M5.}
%		\label{fig:roc2}
%	\end{figure}
%%From the ROC curves we can see that our proposed approach is competitive compared the alternative robust approaches \emph{t-lasso}, \emph{t*-lasso} and \emph{robust-LL.} The edge over glasso is even more pronounced for scenarios M2, M4 and M5. Surprisingly, \emph{trim-glasso} with $h/n=80\%$ achieves superior sensitivity for nearly any specificity.
%
%
%\begin{figure}[!ht]
%		%\vspace{-1cm}
%		\centering		
%		\includegraphics[width=\textwidth]{fig/cellpaths}
%		\caption{Reference Yeast Cell Signaling Network from the KEGG database~\cite{kegg}.}
%		\label{fig:cell}
%	\end{figure}

\end{document}